\pdfoutput=1

\documentclass[10pt,twocolumn,twoside,journal]{IEEEtran}

\usepackage{xspace}
\usepackage{graphicx}
\usepackage{amsmath}
\usepackage{amssymb}

\usepackage{amsthm}

\usepackage{verbatim}

\usepackage{cite,eucal}

\providecommand{\newoperator}[3]{%
\newcommand*{#1}{\mathop{#2}#3}}
\def\tsum{{\textstyle \sum}}
\def\SS{{\cal Q}}

\newoperator{\sst}{\mathrm{s.t.}\!:}{\nolimits}
\newoperator{\argmax}{\mathrm{argmax}}{\limits}

\def\T{{\!\top}}

\def\L{{\mathbf L}}
\def\Z{{\mathbf Z}}
\def\A{{\mathbf A}}
\def\B{{\mathbf B}}
\def\U{{\mathbf U}}
\def\C{{\mathbf C}}

\def\bSigma{{\boldsymbol \Sigma}}

\def\half{{\tfrac{1}{2}}}

\def\X{{\mathbf X}}
\def\Y{{\mathbf Y}}

\def\etal{\emph{et al.}\xspace}
\def\dist{{\bf dist}}
\def\ba{{\mathbf a}}
\def\innerp#1#2{{\left<#1, #2 \right>}}
\def\PSD{\succcurlyeq}
\def\vecGTE{\geq}
\def\NSD{\preccurlyeq}
\def\vecLTE{\leq}
\def\trace{\operatorname{\bf  Tr}}

\def\diag{\operatorname{\bf  diag}}

\def\ba{{\boldsymbol a}}
\def\psd{{p.s.d.}\xspace}

\newcommand{\bx}{ \boldsymbol{x} }
\newcommand{\bb}{ \boldsymbol{b} }

\newcommand{\bu}{ \boldsymbol{u} }

\newcommand{\real}{\ensuremath{ \mathbb{ R } }}
\newcommand{\symmetric}{\ensuremath{ \mathbb{ S } }}
\newcommand{\semidefinite}{\ensuremath{ \mathbb{ S }_+ }}

\newcommand{\normf}[1]{\ensuremath{ \bigl\| #1 \bigr\|_{ \mathrm{F} }^2 } }

\newcommand{\iprod}[2]{\ensuremath{ \left< #1, #2 \right> } }
\newcommand{\st}{{\ensuremath{\rm \; s.t.}\;}}

\usepackage{xspace,colortbl,multirow}
\usepackage{color}
\definecolor{tabgrey}{rgb}{0.,0.08,0.08}
\usepackage{url}

\newtheorem{thm}{Theorem}

\renewcommand{\comment}[1]{}

\begin{document}

\title{
        An Efficient Dual Approach to \\
        Distance Metric Learning
      }

\author{
         Chunhua Shen,
         Junae Kim,
         Fayao Liu,
         Lei Wang,
         Anton van den Hengel
\thanks
{
    The participation of C. Shen and A. van den Hengel was in part
    support by ARC grant LP120200485. C. Shen was also supported by
    ARC Future Fellowship FT120100969.
}
\thanks
{
    C. Shen, F. Liu, and A. van den Hengel are with Australian Center for
    Visual Technologies, and School of Computer Science at The
    University of Adelaide, SA 5005, Australia (e-mail:
    \{chunhua.shen, fayao.liu, anton.vandenhengel\}@adelaide.edu.au).
    Correspondence should be addressed to C. Shen.
}
\thanks
{
    J. Kim is with NICTA, Canberra Research Laboratory, ACT 2600,
    Australia (e-mail: junae.kim@gmail.com).
}
\thanks
{
    L. Wang is with University of Wollongong, NSW 2522, Australia
    (e-mail: leiw@uow.edu.au).
}
}

\maketitle

\begin{abstract}

    Distance metric learning is of fundamental interest in machine learning
    because the distance metric employed can significantly affect the
    performance of many learning methods. 
    Quadratic Mahalanobis metric  learning is  a popular approach  to the
    problem,  but typically  requires solving  a semidefinite  programming (SDP)  problem, which  is
    computationally  expensive.
    Standard interior-point SDP solvers typically have a complexity of $ O( D^{6.5} )$
    (with $  D $ the  dimension of input  data), and can thus only practically 
    solve problems exhibiting less than a few thousand
    variables. Since the number of variables is $ D (  D+1 ) / 2 $, this implies a limit upon
    the size of problem that can practically be solved of  around a
    few hundred dimensions.
    The complexity of the popular
    quadratic Mahalanobis metric learning  approach thus limits the size of  problem to which metric
    learning can  be applied.  Here we  propose a
    significantly more efficient  approach to  the metric
    learning  problem  based on  the  Lagrange  dual formulation of the problem.  The  proposed  formulation is  much
    simpler to implement, and therefore allows much  larger Mahalanobis metric  learning problems to  be solved.
    The time complexity of the proposed method is $ O (D ^  3 ) $, which is significantly lower than that of the SDP approach.
    Experiments  on a variety of datasets demonstrate that the
    proposed method   achieves an accuracy comparable to the
    state-of-the-art, but is applicable to significantly larger
    problems.  We also show that the proposed method can be applied
    to  solve more general Frobenius-norm regularized SDP
    problems approximately.

\end{abstract}

\begin{IEEEkeywords}
        Mahalanobis distance,
        metric learning,
        semidefinite programming,
        convex optimization,
        Lagrange duality.
\end{IEEEkeywords}

\tableofcontents 
\clearpage

\section{Introduction}

    Distance metric learning has attracted a lot of research
    interest recently in the machine learning and pattern recognition
    community due to its wide applications in various areas
    \cite{Weinberger2009JMLR,Shen2010TNN,xing2002distance,Domeniconi2005}.
    Methods relying upon the
    identification of an appropriate data-dependent distance
    metric 
    have been applied
    to a range of problems, from image
    classification and object recognition, to the analysis of genomes.
    The performance of many classic algorithms such as $ k $-nearest neighbor ($ k$NN)
    and $ k $-means clustering depends critically upon the distance metric employed.
   
    Large-margin metric learning  is an approach which
    focuses on identifying a metric by which the data points within
    the same class lie close to each other
    and those in different classes are separated by a large margin.
    Weinberger \etal's large-margin nearest neighbor (LMNN) \cite{Weinberger2009JMLR} 
    is a seminal    work illustrating the approach whereby the metric takes 
    the form of a Mahanalobis distance.
    Given input data $ \ba \in \real^D $,
    this approach to the metric learning problem can be framed as that of 
    learning the linear transformation $\L$
    which optimizes a criterion expressed in terms of Euclidean distances amongst 
    the projected data $ \L \ba \in \real^d$. 

    In order to obtain a convex problem,
    instead of learning the projection matrix
    ($\L \in \real^{ D \times d } $),
    one usually optimizes over the quadratic
    product of the projection matrix ($ \X = \L \L^\T $)
    \cite{xing2002distance,Weinberger2009JMLR}.
    This linearization {\em convexifies} the  original non-convex problem.
    The projection matrix may then be recovered by an
    eigen-decomposition or Cholesky decomposition of $ \X $. 
      
    Typical methods that learn the projection matrix $ \L $ are most
    of the spectral dimensionality reduction methods such as principle
    component analysis (PCA), Fisher linear discriminant analysis
    (LDA); and also neighborhood
    component analysis (NCA) \cite{goldberger2004neighbourhood},
    relevant component analysis (RCA)
    \cite{Shental2002}.
    Goldberger
    \etal  showed that NCA may outperform traditional dimensionality reduction
    methods \cite{goldberger2004neighbourhood}.
    NCA learns the projection matrix directly
    through optimization of a non-convex objective function.
    NCA is thus prone to becoming trapped in  local optima, particularly
    when applied to high-dimensional problems.
    RCA \cite{Shental2002} is an unsupervised metric learning method.
    RCA does not maximize the distance between different classes, but
    minimizes the distance between data in Chunklets. Chunklets
    consist of data that come from the same (although unknown)
    class.

    More methods on the topic of large-margin metric learning actually
    learn $ \X $ directly since 
    Xing \etal \cite{xing2002distance} proposed a global
    distance metric learning approach using a convex optimization
    method. 
    Although the experiments in \cite{xing2002distance} show improved performance on
    clustering problems,
    this is not the case when the method is applied to
    most classification problems.
    Davis \etal \cite{Davis2007Info}
    proposed an information theoretic metric learning (ITML)
    approach to the problem.
    The closest work to ours may be LMNN \cite{Weinberger2009JMLR}
    and BoostMetric \cite{BoostMetric}.
    LMNN is a Mahanalobis metric form of $k$-NN whereby
    the Mahanalobis metric is optimized such
    that the $k$-nearest neighbors
    are encouraged to belong to the same class while data points from different
    classes are separated by a large margin.
    The optimization take the form of an SDP problem. In order to improve the scalability of the
    algorithm, instead of using standard SDP solvers, Weinberger \etal
    \cite{Weinberger2009JMLR} proposed an alternating estimation and projection method.
    At each iteration, the updated estimate $ \X $ is projected back to the
    semidefinite cone using eigen-decomposition,
    in order to preserve the semi-definiteness of $ \X $.
    In this sense, at each iteration,
    the computational complexity of their algorithm is similar
    to that of ours. However, the alternating method needs an extremely large number of
    iterations to converge (the default value being
    $10,000$ in the authors' implementation).
    In contrast, our algorithm solves the corresponding Lagrange dual problem
    and needs only $ 20 \sim 30 $ iterations in most cases.
    In addition, the algorithm we propose is significantly easier to implement.
 
    As pointed in these earlier work, the disadvantage of solving for
    $\X$    is that one needs
    to solve a semidefinite programming (SDP) problem since $ \X $
    must be positive semidefinite (\psd).  Conventional interior-point
    SDP solvers have a  computation complexity of $ O( D^{ 6.5 } ) $,
    with $ D $ the dimension of input data. This high complexity
    hampers the application of metric learning to high-dimensional
    problems.
   
    To tackle this problem,
    here we propose here a new formulation of quadratic
    Mahalanobis metric learning using proximity comparison information
    and Frobenius norm regularization.
    The main contribution is that, with the proposed formulation,
    we can very efficiently solve the SDP problem in the dual space.
    Because strong duality holds, we can then recover the primal
    variable $ \X $ from the dual solution. The computational complexity
    of the optimization is dominated by eigen-decomposition, which is
    $ O( D^{ 3 } ) $ and thus the overall complexity is  $ O( t \cdot D^{ 3 } ) $, 
    where $ t $ is the number of iterations required for convergence.  Note that $t$ 
    does not depend on the 
    size of the data, and is typically $ t \approx  20 \sim 30$.

    A number of methods exist in the literature for 
    large-scale \psd metric learning. 
    Shen \etal~\cite{BoostMetric,ShenJMLR}
    introduced BoostMetric
    by adapting the boosting technique, 
    typically applied to classification, to distance metric learning. 
    This work exploits an important theorem which shows
    that a positive semidefinite  matrix
    with trace of one can always be represented as a convex combination
    of multiple rank-one matrices.
    The work of Shen \etal generalized LPBoost \cite{LPBoost}
    and AdaBoost 
    by showing that it is possible to use matrices as weak learners within these algorithms, 
    in addition to the more traditional use of classifiers or
    regressors as weak learners.    
    The approach we propose here, FrobMetric, is inspired by BoostMetric in the sense that
    both algorithms use proximity comparisons between triplets
    as the source of the training information. 
    The critical distinction between FrobMetric and BoostMetric, however, is that 
    reformulating the problem to use the Frobenius regularization---rather than the trace
    norm regularization---allows the development of a dual
    form of the resulting optimization problem which may be solved far
    more efficiently.
    The BoostMetric approach  iteratively computes the squared
    Mahalanobis distance metric using a rank-one update at each
    iteration.  This has the advantage that only the leading
    eigenvector need be calculated, but leads to slower convergence.
    Indeed, for BoostMetric, the convergence rate remains unclear.      
	The proposed FrobMetric method, 
    in contrast, requires more calculations per iteration, but
    converges in significantly fewer iterations.
    Actually in our implementation, the convergence rate of FrobMetric
    is guaranteed by the employed Quasi-Newton method. 

    The main contributions of this work are as follows.
    \begin{enumerate}
    \item
    We propose a novel formulation of the metric learning problem,
    based on the application of Frobenius norm regularization.
    \item We develop a method for solving this formulation of the problem which is based on optimizing its Lagrange dual.  This method may be practically applied to a much more complex datasets than the competing SDP approach, as it scales better to large databases and to high dimensional data.

    \item
     We generalize the method such that it may be used to solve any Frobenius norm regularized
     SDP problem.  Such problems have many applications in machine learning and computer vision, and by way of example we show that it may be used to
     approximately solve the Frobenius norm perturbed maximum variance
     unfolding (MVU) problem \cite{mvu}. We demonstrate that the proposed
     method is considerably more efficient
     than the original MVU implementation on a variety of data sets
     and that a plausible embedding is obtained. %

    \end{enumerate}

    The proposed scalable semidefinite optimization method
    can be viewed as an extension of the work of Boyd and Xiao \cite{LS2005}.
    The subject of Boyd and Xiao  in \cite{LS2005} was similarly a
    semidefinite least-squares problem: finding the
    covariance matrix that is closest to a given matrix
    under the Frobenius norm metric. Here we study the large-margin
    Mahalanobis metric learning problem, where, in contrast, the objective function is
    not a least squares fitting problem. We also discuss, in Section~\ref{sec:general} the 
    application of the proposed approach to general SDP problems which have Frobenius norm regularization terms.
	Note also that a precursor to the approach described here also appeared in \cite{Shen2011CVPRb}.
    Here we have provided more theoretical analysis as well as experimental
    results. 

    In summary, we propose a simple,
    efficient and scalable optimization method for
    quadratic Mahalanobis metric learning. The formulated optimization
    problem is  convex, thus guaranteeing that the global optimum can be attained in
    polynomial time \cite{boyd2004convex}. Moreover, by working with the Lagrange dual problem,
    we are able to use off-the-shelf eigen-decomposition
    and gradient descent methods such as L-BFGS-B to solve
    the problem.

    \subsection{Notation}
    A column vector is denoted by a bold lower-case letter ($\bx$)
    and a matrix is by a bold upper-case letter ($\X$).
    The fact that a matrix $\A$ is 
    positive semidefinite (\psd) is denoted thus $ \A \PSD 0$.  The
    inequality $\A \PSD \B$ is intended to indicate that $\A - \B \PSD
    0$.
    In the case of vectors, $ \ba \vecGTE \bb  $ denotes the
    element-wise version of the inequality, and when applied relative
    to a scalar ({\em e.g.}, $\ba \vecGTE 0$ ) the inequality is intended to
    apply to every element of the vector.
    For matrices, we denote by $ \real ^{ m \times n} $ the vector space
    of real matrices of size $ m \times n $, and 
    the space of real 
    symmetric
    matrices as $ \mathbb S$.  Similarly,
    the space of symmetric matrices of size
    $ n \times n $ is $ \symmetric^n$, and
    the space of symmetric positive semidefinite matrices
    of size $ n \times n $ is denoted as
    $ \semidefinite^n $.
    The inner product defined on these spaces is $ \iprod{ \A }{ \B } = \trace( \A^\T \B ) $.
    Here $\trace(\cdot)$ calculates the trace of a matrix.
    The Frobenius norm of a matrix is defined as
    $ \normf{\X} = \trace( \X \X ^\T ) = \trace( \X ^\T \X  )$,
    which is the sum of all the squared elements of $ \X $.
    $ \diag( \cdot ) $ extracts the diagonal elements of a square matrix.
    Given a symmetric matrix $ \X $ and its eigen-decomposition
    $ \X = \U \bSigma \U^\T  $ ($ \U$ being an orthonormal matrix,
    and $ \bSigma$ being real and diagonal),
    we define the positive part of $ \X $ as
    \begin{equation*}
        ( \X )_+ = \U  
                     \Bigl[ 
                                \max (  \diag(\bSigma), 0 ) 
                     \Bigr]     \U^\T,
    \end{equation*}
    and the negative part of $ \X $ as
    \begin{equation*}
        ( \X )_- = \U   \Bigl[ 
                                \min (  \diag(\bSigma), 0 ) 
                        \Bigr] \U^\T.
    \end{equation*}
    Clearly,
    $ \X= (\X)_+ + (\X)_- $ holds.

    \subsection{Euclidean  Projection onto the \psd cone} 
    Our proposed method relies on the following standard results,
    which can be found in textbooks such as Chapter 8 of \cite{boyd2004convex}.  
    The positive semidefinite part $ (\X)_+ $ of $ \X $ is the
    projection of $ \X $ onto the \psd cone:
    \begin{equation}
        \label{EQ:proj_psd}
        ( \X)_+ = \left\{
                \min_\Y  \;   \normf{ \Y - \X  },  \st \Y \PSD
        0     \right\}.  
    \end{equation}
    It is not difficult to check that,
    for any $ \Y \PSD 0 $, 
    \[
    \normf{   \X -    ( \X)_+   }  =  \normf{   ( \X)_- }   \leq 
    \normf{   \X - \Y  }. 
    \]
    In other words, although the optimization  problem in 
    \eqref{EQ:proj_psd} appears as an
    SDP programming problem, it can be simply solved by using
    eigen-decomposition, which is efficient. 
    It is this key observation that serves as the backbone of the
    proposed fast method. 

    The rest of the paper is organized as follows.
    In Section \ref{sec:alg}, we present the main algorithm for
    learning a Mahalanobis metric using efficient optimization. In
    Section \ref{sec:general}, we extend our algorithm to more general
    Frobenius norm regularized semidefinite problems.
    The experiments on various datasets are in shown
    Section \ref{Experiments} and we conclude the paper in Section
    \ref{Conclusion}.

\section{Large-margin Distance Metric Learning}
\label{sec:alg}

    We now briefly review quadratic Mahalanobis distance metrics. Suppose that we
    have a set of triplets $\SS = \{(\ba_i, \ba_j, \ba_k) \}$, which encodes proximity comparison
    information. Suppose also that $\dist_{ij}$ computes the Mahalanobis distance between $\ba_i$ and $\ba_j$ under a
    proper Mahalanobis matrix. That is, $\dist_{ij} = \Vert{\ba_i - \ba_j } \Vert_{\X}^2 = (  \ba_i
    - \ba_j) ^\T \X (  \ba_i - \ba_j  )$, where 
    $\X \in \semidefinite^{D\times D}$, 
    is positive
    semidefinite. Such a Mahanalobis metric may equally be parameterized by a projection matrix 
    $\L$ where $\X = \L\L^\T$.

    Let us define the margin associated with a training triplet as
            $\rho_r = (  \ba_i - \ba_k  ) ^\T \X
            (  \ba_i - \ba_k  ) - (  \ba_i - \ba_j  ) ^\T \X
            (  \ba_i - \ba_j  ) = \innerp{\A_r}{\X},
            $ with $\A_r = (  \ba_i - \ba_k  )
            (  \ba_i - \ba_k  )^\T  - (  \ba_i - \ba_j  )
            (  \ba_i - \ba_j  )^\T $.
            Here $ r $ represents the index of the current triplet within the set of $m$ training triplets $ \SS $.
As will be shown in the experiments below, this type of proximity comparison among triplets
    may be easier to obtain that explicit distances for some applications like image retrieval.  Here the metric
learning procedure solely relies on the matrices $  \A_r $ ($r = 1, \dots, m$).

\subsection{Primal problems of Mahanalobis metric learning}

Putting it into the large-margin learning framework, the
optimization problem is to maximize the margin with a regularization
term that is intended to avoid over-fitting (or, in some cases, makes the problem well-posed):
     \begin{align}
      \label{PR:1}
         \max_{\X, \rho, \boldsymbol \xi} \, & \rho
                    -
                      \tfrac{C_1}{m} \tsum_{r=1}^{m}  \xi_r
        \notag \\
        \sst \, & \innerp{\A_r}{\X} \geq \rho - \xi_r, r =
        1,\cdots,m, \tag{P1}
        \\
        \, & {\boldsymbol \xi} \vecGTE 0,
        \rho \vecGTE 0,
        \trace(\X) = 1, \X \PSD 0.
       \notag
      \end{align}
Here $\trace(\X) = 1$ removes the scale ambiguity in $\X$. This is the
formulation proposed in BoostMetric \cite{BoostMetric}.
We can write the above problem equivalently as
     \begin{align}
      \label{PR:2}
         \min_{\X, \boldsymbol \xi} \, & \trace(\X)
                    +
                      \tfrac{C_2}{m} \tsum_{r=1}^{m}  \xi_r
        \notag
        \\
        \sst \, & \innerp{\A_r}{\X} \geq 1 - \xi_r, r =
        1,\cdots,m, \tag{P2}
        \\
        \, & {\boldsymbol \xi} \vecGTE 0, \X \PSD 0.
       \notag
      \end{align}
These  formulations are exactly equivalent given the appropriate choice of the trade-off
parameters $C_1$ and $ C_2 $.
The theorem is as follows.
      \begin{thm}
          A solution of \eqref{PR:1}, $ \X^\star $, is also a solution of
          \eqref{PR:2} and
          vice versa up to a scale factor.

          More precisely, if \eqref{PR:1} with parameter $ C_1 $
          has a solution $( \X^\star$,
          $ {\boldsymbol \xi}^\star $, $ \rho ^ \star > 0 )$, then
          $ ( \tfrac{\X^\star}{\rho^\star},
              \tfrac{ {\boldsymbol \xi}^\star }{\rho^\star} )$
             is the solution of \eqref{PR:2} with parameter
             $ C_2 = C_1/ { {\rm Opt }\eqref{PR:1}  }   $.
             Here $ { {\rm Opt }\eqref{PR:1}  } $
             is the optimal objective
             value of \eqref{PR:1}.
     \end{thm}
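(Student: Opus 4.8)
The plan is to exhibit an explicit scaling bijection between the feasible sets of \eqref{PR:1} and \eqref{PR:2} and to track how the two objectives transform under it. First I would define the forward map sending a feasible triple $(\X,\boldsymbol\xi,\rho)$ of \eqref{PR:1} with $\rho>0$ to $(\X/\rho,\boldsymbol\xi/\rho)$, and the backward map sending a feasible pair $(\Y,\boldsymbol\eta)$ of \eqref{PR:2} with $t:=\trace(\Y)>0$ to $(\Y/t,\boldsymbol\eta/t,1/t)$. Dividing the constraint $\innerp{\A_r}{\X}\geq\rho-\xi_r$ of \eqref{PR:1} by $\rho$ gives exactly $\innerp{\A_r}{\X/\rho}\geq 1-\xi_r/\rho$, the constraint of \eqref{PR:2}, while the sign and $\PSD$ conditions are clearly preserved; dividing the constraint of \eqref{PR:2} by $t$ and using $\trace(\Y/t)=1$ recovers the constraints of \eqref{PR:1}. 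A direct check shows these two maps are mutual inverses, so feasibility transfers in both directions.

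Second, I would verify the quantitative claim for the candidate point. Writing $V:={\rm Opt}\eqref{PR:1}=\rho^\star-\tfrac{C_1}{m}\tsum_r\xi_r^\star$ and using $\trace(\X^\star)=1$, the forward image $\hat\X=\X^\star/\rho^\star$, $\hat{\boldsymbol\xi}=\boldsymbol\xi^\star/\rho^\star$ is feasible for \eqref{PR:2}, and its objective equals $\trace(\hat\X)+\tfrac{C_2}{m}\tsum_r\hat\xi_r=\tfrac{1}{\rho^\star}\bigl(1+\tfrac{C_2}{m}\tsum_r\xi_r^\star\bigr)$. Substituting $C_2=C_1/V$ and simplifying, the two occurrences of $V=\rho^\star-\tfrac{C_1}{m}\tsum_r\xi_r^\star$ cancel to give the clean value $1/V$. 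This is the only place the specific coupling $C_2=C_1/V$ is used, and it explains why the statement is tied to the optimal value rather than giving a pointwise objective correspondence.

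Third, and this is the crux, I would show that $1/V$ is in fact the minimum of \eqref{PR:2}, so that $(\hat\X,\hat{\boldsymbol\xi})$ is optimal. Taking any feasible $(\Y,\boldsymbol\eta)$ of \eqref{PR:2} with $t=\trace(\Y)>0$, I push it back to the feasible point $(\Y/t,\boldsymbol\eta/t,1/t)$ of \eqref{PR:1} and invoke optimality of $V$, obtaining $\tfrac1t\bigl(1-\tfrac{C_1}{m}\tsum_r\eta_r\bigr)\leq V$, i.e. $1\leq Vt+\tfrac{C_1}{m}\tsum_r\eta_r$. Dividing by $V>0$ yields precisely $\tfrac1V\leq t+\tfrac{C_2}{m}\tsum_r\eta_r$, which is the objective of \eqref{PR:2} at $(\Y,\boldsymbol\eta)$. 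Hence every feasible point has objective at least $1/V$, with equality at the candidate, proving optimality; the ``vice versa'' direction follows by running the identical comparison through the inverse map.

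The main obstacle I anticipate is not the core algebra but the degenerate boundary cases that the scaling argument implicitly excludes. The divisions require $\rho^\star>0$ (granted by hypothesis), $t>0$, and $V>0$; I would therefore need to argue that $V>0$ and that \eqref{PR:1} is bounded in the first place, which forces $C_1\geq 1$ (otherwise letting $\rho\to\infty$ with all slacks active drives the objective to $+\infty$). I would then dispose of the remaining case $\trace(\Y)=0$ separately: here $\Y\PSD 0$ forces $\Y=\boldsymbol 0$, the constraints $\innerp{\A_r}{\Y}\geq 1-\eta_r$ force every $\eta_r\geq 1$, and the resulting objective is at least $C_2=C_1/V\geq 1/V$, so such points cannot beat the candidate. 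Combining the $\trace(\Y)>0$ bound with this degenerate case completes the optimality proof.
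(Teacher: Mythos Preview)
Your argument is correct, but it proceeds along a genuinely different route from the paper. The paper derives the Lagrange duals \eqref{EQ:Dual1} and \eqref{EQ:Dual2} of \eqref{PR:1} and \eqref{PR:2}, checks that primal feasibility of \eqref{PR:1} scales to primal feasibility of \eqref{PR:2} and that dual feasibility of \eqref{EQ:Dual1} scales to dual feasibility of \eqref{EQ:Dual2}, and then closes the loop by invoking strong duality (${\rm Opt}\eqref{PR:1}=\gamma^\star={\rm Opt}\eqref{EQ:Dual1}$) to match the primal and dual objective values at the scaled point, thereby certifying optimality. You, by contrast, stay entirely on the primal side: you build the explicit bijection, evaluate the \eqref{PR:2} objective at the candidate to get $1/V$, and then obtain the matching lower bound by pushing an arbitrary feasible $(\Y,\boldsymbol\eta)$ back through the inverse map and comparing against ${\rm Opt}\eqref{PR:1}$. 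Your approach is more elementary and self-contained (no Lagrangian machinery, no appeal to strong duality), and you are more explicit about the edge cases ($V>0$, the necessity of $C_1\geq 1$ for boundedness, and the degenerate $\trace(\Y)=0$ branch) that the paper's proof leaves implicit when it divides by $\gamma^\star$. The paper's route, on the other hand, exhibits the dual problems concretely, which is in keeping with the duality theme of the surrounding section, even though \eqref{EQ:Dual1} and \eqref{EQ:Dual2} are not reused later.
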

\begin{proof}
    It is well known that
    the necessary and sufficient conditions for the optimality of SDP problems are primal feasibility, dual
    feasibility, and equality of the primal and dual objectives.
    We can easily derive the duals of \eqref{PR:1} and \eqref{PR:2} respectively:
    \begin{align}
        \label{EQ:Dual1}
        \min_{\gamma, \bu} \; & \gamma
        \notag
        \\
        \sst &  \textstyle \sum_{r=1}^m u_r \A_r  \NSD \gamma {\bf I},
        \tag{D1}
        \\
        & {\bf 1}^\T \bu = 1; 0 \vecLTE \bu \vecLTE \tfrac{C_1}{m}; \notag
    \end{align}
    and,
    \begin{align}
        \label{EQ:Dual2}
        \max_{\bu} \; & \textstyle \sum_{r=1}^m u_r
        \notag
        \\
        \sst &  \textstyle \sum_{r=1}^m u_r \A_r  \NSD {\bf I},
        \tag{D2}
        \\
        &  0 \vecLTE \bu \vecLTE \tfrac{C_2}{m}. \notag
    \end{align}
    Here $ \bf I $ is the identity matrix.

    Let $  ( \X^\star$,
    $ {\boldsymbol \xi}^\star $, $ \rho ^ \star  )$ represent the optimum of the 
    primal problem \eqref{PR:1}. 
    Primal feasibility of
    \eqref{PR:1} implies primal feasibility of \eqref{PR:2}, and thus that
    \[ \innerp{ \A_r  }{ \tfrac{\X^\star }{ \rho^\star} } \geq 1 - \tfrac{\xi^\star }{ \rho^\star } .\]

    Let $ ( \gamma^\star, \bu^\star  ) $ be the optimal solution of the dual problem
    \eqref{EQ:Dual1}. Dual feasibility of \eqref{EQ:Dual1} implies dual feasibility
    of \eqref{EQ:Dual2}, and thus that $  \sum_r { u_r^\star } / { \gamma^\star } \A_r \NSD \bf I $,
    and $  0 \vecLTE \bu^\star/ { \gamma^\star } \vecLTE C_2 / ( m { \gamma^\star } ) $.
    Since the duality gap between \eqref{PR:1} and \eqref{EQ:Dual1} is zero,
    $ {\rm Opt}\eqref{EQ:Dual1} = \gamma^\star = {\rm Opt}\eqref{PR:1} $.

    Last we need to show that the objective function values
    of \eqref{PR:2} and \eqref{EQ:Dual2} are the same. This is easy to verify
    from the fact that ${\rm Opt}\eqref{PR:1} = {\rm Opt}\eqref{EQ:Dual1}  $:
		\begin{align*}
   \rho^\star - \tfrac{C_1}{m} {\bf 1}^\T {\boldsymbol \xi}
    & = \gamma^\star \\
    \Longrightarrow 
      \rho^\star  ({\bf 1}^\T \bu^\star ) - \tfrac{C_1}{m} {\bf 1}^\T {\boldsymbol \xi}
    & = \gamma^\star \trace( \X^\star)  \\
    \Longrightarrow 
     \trace{ ( \X^\star )  /  \rho^\star  } + \tfrac{C_2}{m}
    {\bf 1}^\T{\boldsymbol \xi^\star } / \rho^\star  & =
    ({\bf 1}^\T \bu^\star ) /
    \gamma^\star.
		\end{align*}
        This concludes the proof. 
\end{proof}

    Both problems can be written in the form of standard SDP problems since
    the objective function is linear and a \psd constraint is
    involved.  Recall that we are interested in a Frobenius norm
    regularization rather that a trace norm regularization. The key
    observation is that {\em the Frobenius norm regularization term
    leads to a simple and scalable optimization}. So replacing the
    trace norm in \eqref{PR:2} with the Frobenius norm we have:
    \begin{align}
    \label{PR:4}
      \min_{\X, \xi} \; & \tfrac{1}{2} \normf{\X}
                + \tfrac{C_3}{m} \tsum_{r=1}^{m}  \xi_r
        \notag
        \\
        \sst \; & \innerp{\A_r}{\X} \geq 1 - \xi_r, r =
        1,\cdots,m, \tag{P3}
        \\
        \; & { \boldsymbol \xi } \vecGTE 0, \X \PSD 0.
       \notag
    \end{align}
    Although \eqref{PR:4} and \eqref{PR:2}  are not exactly the same,
    the only difference is the regularization term.
    Different regularizations can lead to different solutions.
    However, as the $ \ell_1 $ and $ \ell_2 $ norm regularizations 
    in the case of vector variables such as in support vector machines
    (SVM), 
    in general, these two regularizations would perform similarly in
    terms of the final classification accuracy. 
    Here, one does not expect that a particular form of
    regularization, either the trace  or Frobenius norm
    regularization, would perform better than the other one.  
    As we have pointed out, 
    the advantage of the Frobenius norm is faster optimization.  

    One may convert \eqref{PR:4} into a standard SDP problem
    by introducing an auxiliary variable:
     \begin{align*}
    \label{PR:4B}
      \min_{\X, \xi, \delta } \; &  \delta
                + \tfrac{C_3}{m} \tsum_{r=1}^{m}  \xi_r
        \notag
        \\
        \sst \; & \innerp{\A_r}{\X} \geq 1 - \xi_r, r =
        1,\cdots,m, 
        \\
        \; & { \boldsymbol \xi } \vecGTE 0, \X \PSD 0;
       \;
       \tfrac{1}{2} \normf{\X} \leq \delta. 
       \notag
    \end{align*}
    The last constraint can be formulated as a \psd constraint
    $  
    \begin{bmatrix}
        1  &  \X \\
        \X & 2\delta
    \end{bmatrix} \PSD 0.
    $
    So in theory, we can  use an
    off-the-shelf SDP solver to solve this primal problem directly.
    However, as mentioned previously, the computational complexity of this approach is very
    high, meaning that only small-scale problems can be solved within reasonable
    CPU time limits.  
    
    Next, we show that, the Lagrange dual problem of
    \eqref{PR:4} has some desirable properties.
    
    \subsection{Dual problems and desirable properties} 
    \label{SSUBSEC:Dual}

		We first introduce the Lagrangian dual multipliers, $ \Z $ 
 		which we associate with the \psd constraint $ \X \PSD 0$, and  
 		$ \bu $ which we associate with the remaining constraints upon ${ \boldsymbol \xi }$.

        The Lagrangian
        of \eqref{PR:4} then becomes
        \begin{align*}
            \ell ( \underbrace{ \X, {\boldsymbol \xi} }_{\rm primal },
            \underbrace{ \Z, \bu,  {\boldsymbol p} }_{\rm dual }) =
                \half \Vert \X \Vert_{\rm F}^2 + \tfrac{C_3}{m} \tsum_{r=1}^m \xi_r
                - \tsum_r { u_r \left< \A_r, \X \right> }
                \\
                + \tsum_r u_r - \tsum_r u_r \xi_r
                - {\boldsymbol p} ^\T {\boldsymbol \xi} - \left< \X, \Z    \right>
        \end{align*}
        with $ \bu \vecGTE 0$ and $ \Z \PSD 0$.
        We need to minimize the Lagrangian over $ \X $ and $ \boldsymbol
        \xi$, which can be done by setting the first derivative to zero, from which we see that
        \begin{equation}
            \label{EQ:KKT1}
            \X^\star = \Z^\star + \tsum_r u_r^\star \A_r,
        \end{equation}
        and
        $  \tfrac{C_3 }{ m } \vecGTE \bu \vecGTE 0 $.
        Substituting the expression for $ \X $
        back into the Lagrangian, we obtain the dual formulation:
     \begin{align}
      \label{PR:5}
         \max_{\Z, \bu} \; & \tsum_{r=1}^{m}  u_r
                    -
                      \tfrac{1}{2}  \normf{\Z + \tsum_{r=1}^{m} u_r\A_r}
        \tag{D3}
        \\
        \sst \; & \tfrac{C_3}{m} \vecGTE \bu \vecGTE 0, \Z \PSD 0.
       \notag
      \end{align}
        This dual problem still has a \psd constraint and it is
        not clear how it may be solved more efficiently than by using standard
        interior-point methods.
        Note, however, that as both the primal and dual problems are convex, 
        Slater's condition holds, under mild conditions (see \cite{boyd2004convex} for details).  
        Strong duality thus holds between \eqref{PR:4} and
        \eqref{PR:5}, %
        which means that the 
        objective values of these two problem coincide at optimality
        and in many cases
        we are able to indirectly solve the primal by solving the dual and vice versa.
        The  Karush--Kuhn--Tucker  (KKT) conditions \eqref{EQ:KKT1} thus
        enable us to recover $ \X^\star $, which is the primal
        variable of interest, from the dual solution.

        Given a fixed $\bu$, the dual problem \eqref{PR:5} may be simplified
        \begin{equation} \label{EQ:D1}
        \min_{\Z} \; \normf{\Z + \tsum_{r=1}^{m} u_r\A_r}, ~\sst \Z \PSD 0.
        \end{equation}
    To simplify the
    notation we define $ \hat{\A} $ as a function of $ \bu $
    \[
            \hat{\A} = -\tsum_{r=1}^{m}u_r\A_r.
    \]
        Problem \eqref{EQ:D1} then becomes that of finding the \psd~matrix  $ \Z $
    such that $  \Vert{  \Z -  \hat{\A} } \Vert^2_{\rm F} $
    is minimized.
    This problem has a closed-form solution, which is the positive part of
    $\hat{\A}$:
    \begin{equation}
        \label{EQ:D2}
        {\Z}^\star = (\hat{\A})_+.
    \end{equation}
     Now the original dual problem may be simplified 
     \begin{equation}
      \label{EQ:D3}
         \max_{\bu} \, \tsum_{r=1}^{m}  u_r
                    -
                    \tfrac{1}{2} \bigl\| {(\hat{\A})_-} \bigr\|^2_{\rm F},
                    \;
        \sst \, \tfrac{C_3}{m} \vecGTE \bu \vecGTE 0.
      \end{equation}
      The KKT condition is simplified into
      \begin{equation}
          \label{EQ:KKT2}
          \X^\star = ( \hat{\A} )_+  - \hat{\A} = - ( \hat{\A} )_-.
      \end{equation}
      From the definition of the operator $ ( \cdot)_-  $,
      $ \X^\star $ computed by \eqref{EQ:KKT2} must be \psd
      Note that we have now achieved a
      simplified dual problem which has no matrix variables, and only simple box constraints on $ \bu $.
      The fact that
      the objective function of \eqref{EQ:D3} is differentiable (but not
      twice
        differentiable)
      allows us to optimize for $ \bu $ in \eqref{EQ:D3}
      using gradient descent methods (see Sect. 5.2 in
      \cite{Borwein2000}).
    To illustrate why the objective function is differentiable, we can
    see the following simple example. 
    For $ F(\X) =  \tfrac{1}{2} \normf { (\X)_-  }  $, the gradient
    can be calculated as 
    \[
        \nabla F(\X) =   (\X)_-,
    \]
    because of the following fact.
    Given a symmetric $ \delta\X $, we have
    \[
            F(  \X + \delta\X ) = F( \X ) + \trace(  \delta\X   (\X)_-
            ) + o( \delta\X ).   
    \]
    This can be verified by using the perturbation theory of
    eigenvalues of symmetric matrices. When we set $ \delta\X $ to be
    very small, the above equality is the definition of gradient.  

    Hence, we can use a sophisticated off-the-shelf first-order Newton
    algorithm such as L-BFGS-B \cite{Liu1989} to solve \eqref{EQ:D3}.
    In summary,
    the optimization procedure is as follows.
    \begin{enumerate}
        \item 
            Input the training triplets and calculate $ \A_r$, $ r = 1 \dots m$.
        \item 
            Calculate the gradient of the objective function in \eqref{EQ:D3},
            and use L-BFGS-B to optimize \eqref{EQ:D3}.
        \item
            Calculate $ \hat\A$ using the output of L-BFGS-B (namely,
            $ \bu^\star$) and
            compute   $ \X^\star $ from \eqref{EQ:KKT2}
            using eigen-de\-com\-po\-si\-tion.
    \end{enumerate}
    To implement this approach, one only needs to implement  the callback
    function of L-BFGS-B, which computes the gradient of
    the objective function of \eqref{EQ:D3}.
    Note that other gradient methods such as  conjugate gradients may be
    preferred when the number of constraints (i.e., the size of
    training triplet set,  $ m$) is large.
    The gradient of dual problem \eqref{EQ:D3} can be calculated as
    \[
        g ( u_r ) = 1 + \innerp{ (  \hat\A  )_-   } { \A_r },
        r = 1, \dots, m.
    \]
    So, at each iteration,
    the computation of $ (  \hat\A  )_- $, which requires
    full eigen-decomposition,
    only need be calculated once in order to evaluate all of the gradients, as
    well as the function value.
    When the number of constraints is not far more than the
    dimensionality of the data, eigen-decomposition dominates the
    computational complexity at each iteration. In this case, the overall
    complexity is $ O ( t \cdot D^3 ) $ with $ t $ being
    around $ 20 \sim 30 $.

\section{General Frobenius Norm SDP}
\label{sec:general}

    In this section, we generalize the proposed idea
    to a broader setting.
    The general formulation of an SDP problem writes:
    \[
    \min_{\X} \; \iprod{\C}{\X}, \st \, \X \PSD 0,
                \iprod{\A_i} { \X } \leq b_i, i = 1\dots m.
    \]
    We consider its Frobenius norm regularized version:
    \[
    \min_{\X}  \; \iprod{\C}{\X} + \tfrac{1}{2\sigma}
                 \normf{\X}, \st \, \X \PSD 0,
                \iprod{\A_i} { \X }
                \leq b_i, \forall i.
    \]
    Here $ \sigma $ is a regularized constant.
    We start by deriving the Lagrange dual of this Frobenius norm regularized SDP.
    The  dual problem is,
    \begin{align}
        \label{EQ:D10}
        \min_{\Z,\bu}  \;   \tfrac{1}{2} \sigma \normf{\Z- \C - {\hat\A}} + \bb^\T \bu,
        \st \Z \PSD 0, \bu \vecGTE
        0.
    \end{align}
    The KKT condition is
    \begin{equation}
        \label{EQ:2A}
        \X^\star = \sigma ( \Z^\star - \hat\A  - \C ),
    \end{equation}
    where we have introduced the notation $ \hat\A = \sum_{i =1}^m u_i \A_i $.
    Keep it in mind that $ \hat\A $ is a function of the dual variable $ \bu $.
    As in the case of metric learning,
    the important observation is that $ \Z $ has an analytical solution when
    $ \bu $ is fixed:
    \begin{equation}
        \label{EQ:E1}
        \Z = ( \C + \hat\A  )_+.
    \end{equation}
    Therefore we can simplify \eqref{EQ:D10} into
    \begin{align}
        \label{EQ:D20}
        \min_{\bu} \; \tfrac{1}{2} \sigma \normf{ ( \C+\hat\A )_- }
         + \bb^\T \bu, \st \bu \vecGTE 0.
    \end{align}
    So now
    we can efficiently solve the dual problem using gradient descent methods.
    The gradient of the dual function is
    \[
    g ( u_i ) = \sigma \iprod{ ( \C + \hat\A )_- } { \A_i }
    + b_i, \forall i=1\dots m.
    \]
    At optimality, we have $ \X ^\star = - \sigma (  \C + {\hat\A}^\star )_- $.

    The core idea of the proposed method here may be applied to an SDP which
    has a term in the format of Frobenius norm, either in the objective function
    or in the constraints.

    In order to demonstrate the performance of the proposed general Frobenius
    norm SDP approach,
    we will show how it may be applied to the problem of Maximum Variance Unfolding (MVU).
    The MVU optimization problem writes
    \[
    \max_{\X} \, \trace( \X ) \, \st \innerp{ \A_i  }{ \X } \leq b_i, \forall i;
    {\boldsymbol 1}^\T \X {\boldsymbol 1} = 0; \X \PSD 0.
    \]
    Here $ \{ \A_i, b_i \}  $, $ i = 1\cdots$, encode the local distance constraints.
    This problem can be solved using off-the-shelf SDP solvers, which, as is described above, 
    does not scale well.
    Using the proposed approach, we modify the objective function to 
    $  \max_{\X} \, \trace( \X ) - \frac{1}{2\sigma} || \X ||^2_{\rm F}$. 
    When $ \sigma $ is sufficiently large,
    the solution to this Frobenius norm perturbed version is a reasonable
    approximation to the original problem.
    We thus use the proposed approach to solve MVU approximately.

\section{Experimental results}
\label{Experiments}

    We first run metric learning experiments on  UCI benchmark data,
    face recognition, and action recognition datasets.
    We then approximately solve the MVU problem
    \cite{mvu} using the proposed general Frobenius norm SDP approach.

\subsection{Distance metric learning}

\begin{table*}[tb!]
\begin{center}
\caption{Test errors of various metric learning methods
on  UCI data sets with $3$-NN. NCA \cite{goldberger2004neighbourhood}
does not output a result on those larger data sets due to memory problems.
Standard deviation is reported for data sets having multiple runs.
}
\label{Table:UCIResults} \centering
    \renewcommand{\arraystretch}{1.25}
\begin{tabular}{l||c|c|c|c|c|c|c}
\hline
                        & MNIST & USPS & letters & Yale faces &  Bal & Wine & Iris\\
\arrayrulecolor{tabgrey}
\hline\hline

\# {samples}   & 70,000 & 11,000 & 20,000 & 2,414 & 625  & 178 &
150\\\hline

\# triplets  & 450,000 & 69,300 & 94,500 & 15,210 & 3,942& 1,125&
945\\\hline

 dimension  & 784    & 256    & 16     & 1,024 & 4 & 13 & 4
\\\hline

 dimension after PCA & 164    & 60     &        & 300   &  & &
\\\hline

\# training & 50,000 & 7,700  & 10,500 & 1,690  & 438  & 125 &
105\\\hline

\# validation  & 10,000  & 1,650  & 4,500  & 362   & 94   & 27 &
23\\\hline

\# test & 10,000  & 1,650  & 5,000  & 362   &  93   & 26 &
22\\\hline

\# classes & 10     & 10     & 26     & 38    &  3    & 3 &
3\\\hline

\# runs  & 1      & 10      & 1     & 10    & 10    &  10  & 10 \\

\hline \hline

\textbf{Error Rates $ \%$} &  &  &  &  &   &  & \\ \hline
Euclidean & 3.19 &  4.78 (0.40) &  5.42 &  28.07 (2.07) &  18.60 (3.96) & 28.08 (7.49) & 3.64 (4.18)\\
\hline

PCA & 3.10 &  3.49 (0.62) & - &  28.65 (2.18) & - & - & - \\ \hline

LDA & 8.76 & 6.96 (0.68) & 4.44 &  5.08 (1.15) &  12.58 (2.38) & 0.77 (1.62) & \textbf{3.18 (3.07)}\\
\hline

SVM  & 2.97 & \textbf{2.15 (0.30)} &
2.96 &  \textbf{4.94 (2.14)} &  \textbf{5.59 (3.61)} & 1.15 (1.86) & 3.64 (3.59) \\
\hline

RCA \cite{Shental2002} & 7.85 & 5.35 (0.52) & 4.64 &  7.65 (1.08) &  17.42 (3.58) & \textbf{0.38 (1.22)} & \textbf{3.18 (3.07)}\\
\hline

NCA \cite{goldberger2004neighbourhood} & - & - & - & -  &  18.28 (3.58) & 28.08 (7.49) & \textbf{3.18 (3.74)}\\
\hline

LMNN \cite{Weinberger2009JMLR} & \textbf{2.30} &  3.49 (0.62) &  3.82
&  14.75
(12.11) & 12.04 (5.59) & 3.46 (3.82)\footnotemark & 3.64 (2.87)\\
\hline

ITML \cite{Davis2007Info} &   2.80 &  3.85 (1.13) &  7.20 &  19.39
(2.11) & 10.11 (4.06) & 28.46 (8.35) & 3.64 (3.59)
\\ \hline

BoostMetric \cite{BoostMetric} & 2.76 & 2.53 (0.47) &
3.06 &  6.91 (1.90) &  10.11 (3.45) & 3.08 (3.53) & \textbf{3.18 (3.74)} \\
\hline

FrobMetric (this work) & 2.56 & 2.32 (0.31) &
 \textbf{2.72} & 9.20 (1.06) &  9.68 (3.21) & 3.85 (4.44) & 3.64 (3.59) \\
\hline

\hline \hline

\textbf{Computational Time} &  &  &  &  &   &  & \\ \hline
LMNN & 11h &  20s &  1249s &  896s &  5s & 2s & 2s\\
\hline

ITML &  1479s & 72s  &  55s &  5970s &  8s & 4s & 4s\\
\hline

BoostMetric & 9.5h & 338s & \textbf{3s} &  572s &  \textbf{less than 1s} & 2s & \textbf{less than 1s}\\
\hline

FrobMetric & \textbf{280s} & \textbf{9s} & 13s &  \textbf{335s} &  \textbf{less than 1s} & \textbf{less than 1s} & \textbf{less than 1s}\\
\arrayrulecolor{black}

\hline
\end{tabular}
\end{center}
\end{table*}
\footnotetext{LMNN can solve for either $ \X $ or the projection
matrix $ \L $. 
When LMNN solves for $ \X $ on ``Wine''
set, the error rate is $20.77\% \, \pm 14.18\%$.
}

\subsubsection{UCI benchmark test}\label{uci}

    We perform a
    comparison between the proposed FrobMetric and a selection of the current state-of-the-art distance
    metric learning methods, including RCA \cite{Shental2002},
    NCA \cite{goldberger2004neighbourhood},
    LMNN \cite{Weinberger2009JMLR}, BoostMetric \cite{BoostMetric}
    and ITML \cite{Davis2007Info} on data
    sets from the UCI Repository.

    We have included results from PCA, LDA and support vector machine (SVM) with
    RBF Gaussian kernel as baseline approaches.
    The SVM results achieved using the {\tt libsvm} \cite{LibSVM}
    implementation. The kernel and regularization parameters
    of the SVMs were selected using cross validation.

    As in  \cite{Weinberger2009JMLR},
    for some data sets  (MNIST, Yale faces and USPS),
    we have applied PCA to reduce the original
    dimensionality and to reduce noise.

    For all experiments, the task is to classify unseen instances in a
    testing subset. To accumulate statistics, the data are randomly
    split into $10$ training/validating/testing subsets, except
    MNIST and Letter, which are already divided into subsets. We tuned
    the regularization parameter in the compared methods using cross-validation.
    In this experiment, about $15\%$ of data  are used  for cross-validation
    and $15\%$ for testing.

    For FrobMetric and BoostMetric in \cite{BoostMetric},
    we use $3$-nearest neighbors to generate triplets and check the
    performance using $3$NN.
    For each training sample ${\ba_i}$,
    we find its $3$ nearest neighbors in the same class 
    and the $3$ nearest neighbors in the difference classes.
    With $3$ nearest neighbors information, the number of triplets
    of each data set for FrobMetric and BoostMetric
    are shown in Table \ref{Table:UCIResults}. FrobMetric
    and BoostMetric have used exactly the same training information.
    Note that other methods do not
    use triplets as training data.
    The error rates based on $3$NN and computational time for each learning
    metric are shown as well.

    Experiment settings for LMNN and ITML follow the original work
    \cite{Weinberger2009JMLR} and \cite{Davis2007Info}, respectively.
    The identity matrix is used for ITML's initial metric matrix.
    For NCA, RCA, LMNN, ITML and BoostMetric, we used the codes provided
    by the authors.  We implement our FrobMetric in Matlab and L-BFGS-B
    is in Fortran and a Matlab interface is made.  All the computation time
    is reported on a workstation with 4 Intel Xeon E5520 (2.27GHz) CPUs
    (only single core is used) and 32 GB RAM.

    Table~\ref{Table:UCIResults} illustrates that the proposed FrobMetric
    shows  error rates comparable with state-of-the-art methods such as
    LMNN, ITML, and BoostMetric.
    It also performs on par with a nonlinear SVM on these datasets. 
    
    In terms of computation time, FrobMetric
    is much faster than all
    convex optimization based learning methods (LMNN, ITML, BoostMetric)
    on most data sets.
    On high-dimensional data sets with many data points, as the theory predicts,
    FrobMetric is significantly faster than LMNN. For example,
    on MNIST, FrobMetric is almost $ 140 $ times faster.
    FrobMetric is also faster than BoostMetric, although at each iteration
    the computational complexity of BoostMetric is lower. We observe that BoostMetric
    requires significantly more iterations to converge.

    Next we use FrobMetric to learn a metric for face recognition on
    the ``Labeled Faces in the Wild'' data set \cite{LFWTech}.

\subsubsection{Unconstrained face recognition}
\label{face}

\begin{figure}[t!]
    \centering
    \fbox{
    \includegraphics[width=0.12\textwidth]
                    {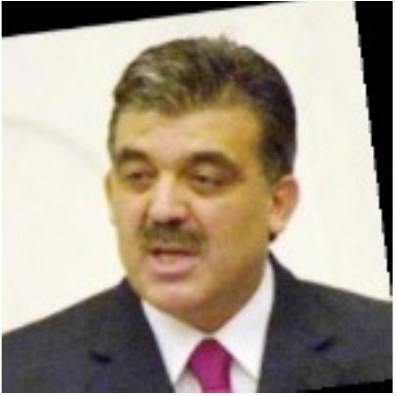}
    \includegraphics[width=0.12\textwidth]
                    {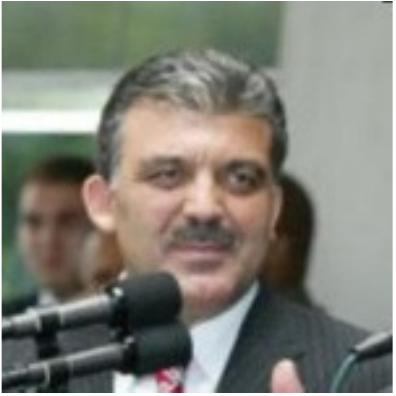}
    \includegraphics[width=0.12\textwidth]
                    {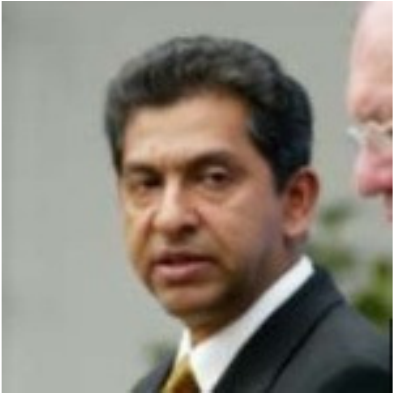}
    }\\
    \fbox{
    \includegraphics[width=0.12\textwidth]
                    {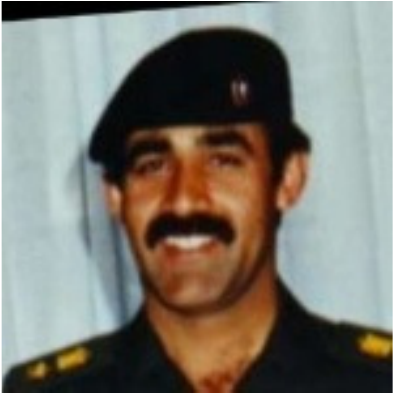}
    \includegraphics[width=0.12\textwidth]
                    {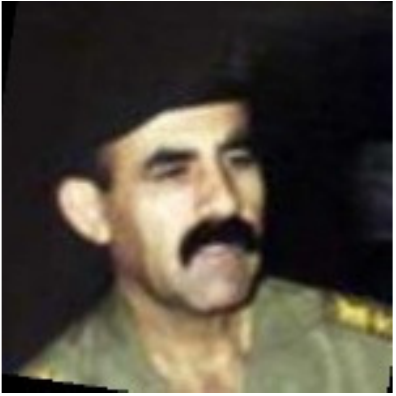}
    \includegraphics[width=0.12\textwidth]
                    {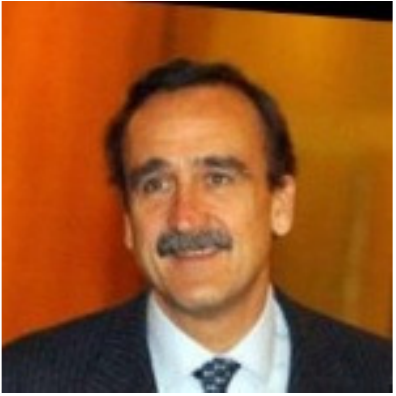}
    }\\
    \fbox{
    \includegraphics[width=0.12\textwidth]
                    {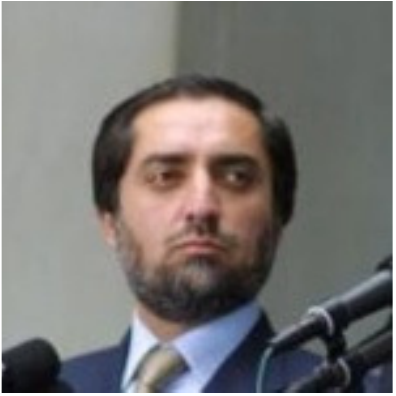}
    \includegraphics[width=0.12\textwidth]
                    {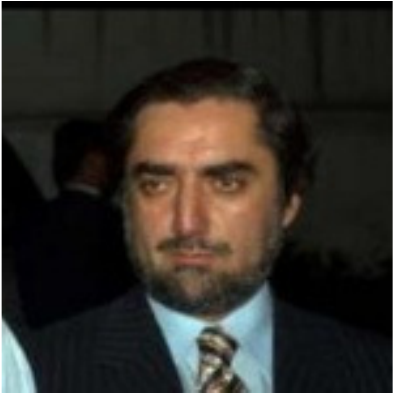}
    \includegraphics[width=0.12\textwidth]
                    {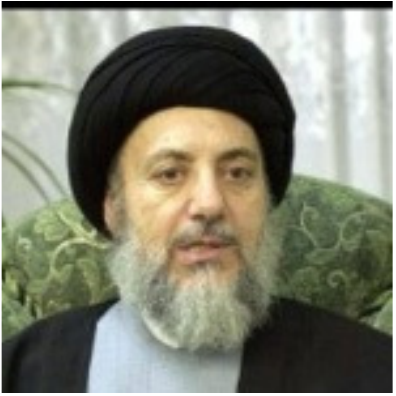}
    }\\
    \fbox{
    \includegraphics[width=0.12\textwidth]
                    {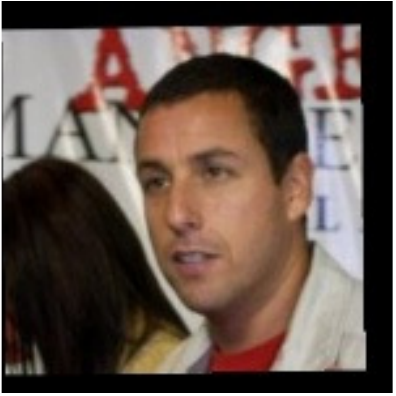}
    \includegraphics[width=0.12\textwidth]
                    {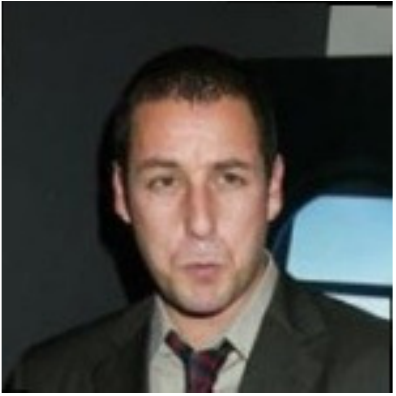}
    \includegraphics[width=0.12\textwidth]
                    {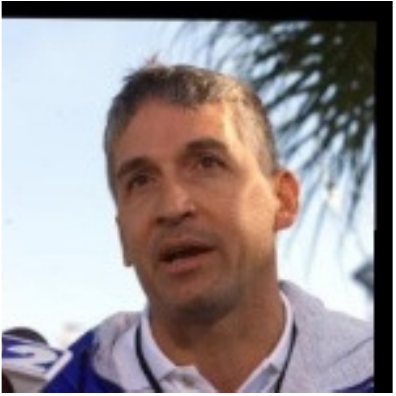}
    }
    \caption{Generated triplets based on pairwise information provided by the
    LFW data set. The first two belong to the same individual and the third is a
    different individual.}
    \label{fig:LFW_trip}
\end{figure}

    In this experiment, we have compared the proposed FrobMetric to 
    state-of-the-art methods for the task of face pair-matching problem
    on the ``Labeled Faces in the Wild'' (LFW) \cite{LFWTech} data set.
    This is a data set of
    unconstrained face images,  including $13,233$ images of $5,749$ people collected
    from news articles on the internet.  The dataset is particularly interesting because it captures much of the variation seen in real images of faces.
    The face recognition task here is to determine whether a presented pair of images are
    of the same individual.
    So we classify
    unseen pairs whether each image in the pair indicates same
    individual or not, by applying M$k$NN of \cite{Guillaumin09isthat} instead
    of $k$NN.

    Features of face images are extracted by computing $3$-scale,
    $128$-dimensional SIFT descriptors \cite{lowe2004sift}, which
    center on $9$ points of facial features extracted by a facial
    feature descriptor,  as described in \cite{Guillaumin09isthat}. PCA
    is then performed on the SIFT vectors to reduce the dimension to between $100$ and $400$.

    Since the proposed FrobMetric method
    adopts the triplet-training concept, we need to use individual's
    identity information to generate the third example in a triplet, given a pair.
    For \textit{matched} pairs, we find the third example that belongs to a \textit{different}
    individual with $k$ nearest neighbors
    ($ k $ is between $5$ and
    $30$). For \textit{mismatched} pairs, we find the $ k $ nearest neighbors ($k$ is between $5$ to
    $30$) that have the same identity as one of the individuals in the given pair.
    Some of the generated triplets are
    shown in Figure~\ref{fig:LFW_trip}.
    We select the regularization parameter using cross validation
    on View 1 and  train and test the metric using the $10$ provided
    splits in View 2 as suggested by \cite{LFWTech}.

\begin{table*}[tb!]
\caption{ {Comparison of the face recognition performance accuracy (\%) and CPU time of
our proposed FrobMetric on LFW datasets varying PCA dimensionality
and the number of triplets in each fold for training.} }
\centering {
 \renewcommand{\arraystretch}{1.25}
\begin{tabular}{r||c|c|c|c}
        \hline
\# {triplets}  & 100D & 200D & 300D& 400D\\
 \arrayrulecolor{tabgrey}
\hline \hline

\textbf{Accuracy} & & & \\ \hline

 3,000  & 82.10 (1.21) & 83.29 (1.59) & 83.81 (1.04) & 84.08 (1.18)\\
\hline

 6,000 & 82.26 (1.27) & 83.55 (1.28)& 84.06 (1.06) & 83.91 (1.48) \\ \hline

 9,000 & 82.40 (1.30) & 83.62 (1.18)& 84.08 (0.92) & 84.34 (1.23) \\ \hline

12,000 & 82.50 (1.22) & 83.86 (1.18)& 84.13 (0.84) & 84.19 (1.31) \\
\hline

15,000 & 82.55 (1.30) & 83.70 (1.22)& 84.29 (0.77) & 84.27 (0.90) \\
\hline

18,000 & 82.72 (1.24) & 83.69 (1.23) & 84.20 (0.84) & 84.32 (1.45) \\
\hline \hline

\textbf{CPU Time} & & & \\ \hline

3,000 & 51s & 215s & 373s & 937s \\ \hline

6,000 & 100s & 222s & 661s  & 1,312s \\ \hline

9,000 & 142s & 534s & 1,349s  & 3,499s \\ \hline

12,000 & 186s & 647s & 1,295s  & 6,418s \\ \hline

15,000 & 235s & 704s & 1,706s  & 3,616s \\ \hline

18,000 & 237s & 830s & 2,342s  & 7,621s \\
\arrayrulecolor{black}
\hline
\end{tabular}
} \label{Table:lfw_error2}
\end{table*}

\begin{figure}[htb!]
\begin{center}
\begin{tabular}{c}
\includegraphics[width=0.45\textwidth]{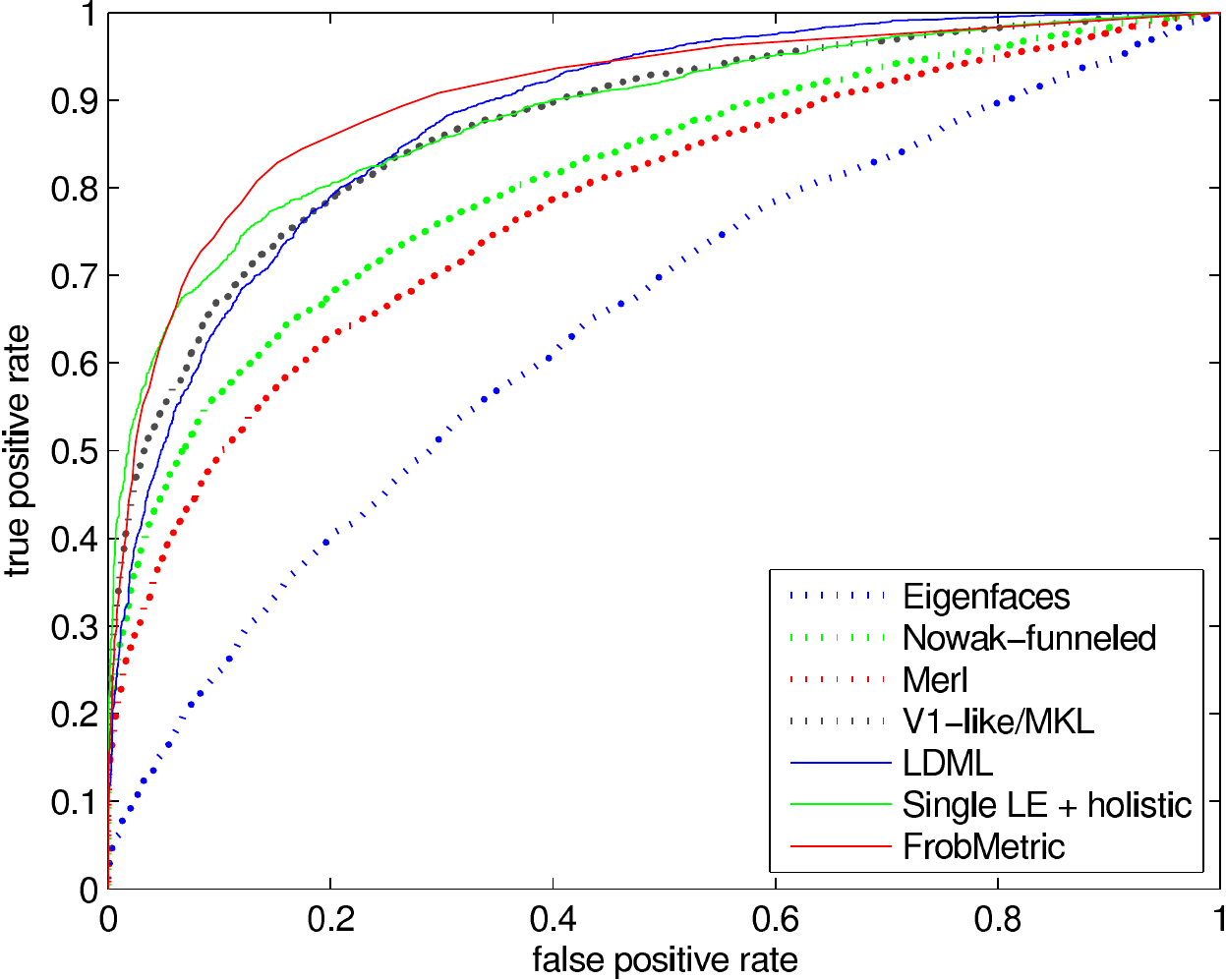}\\(a)
\\
\includegraphics[width=0.45\textwidth]{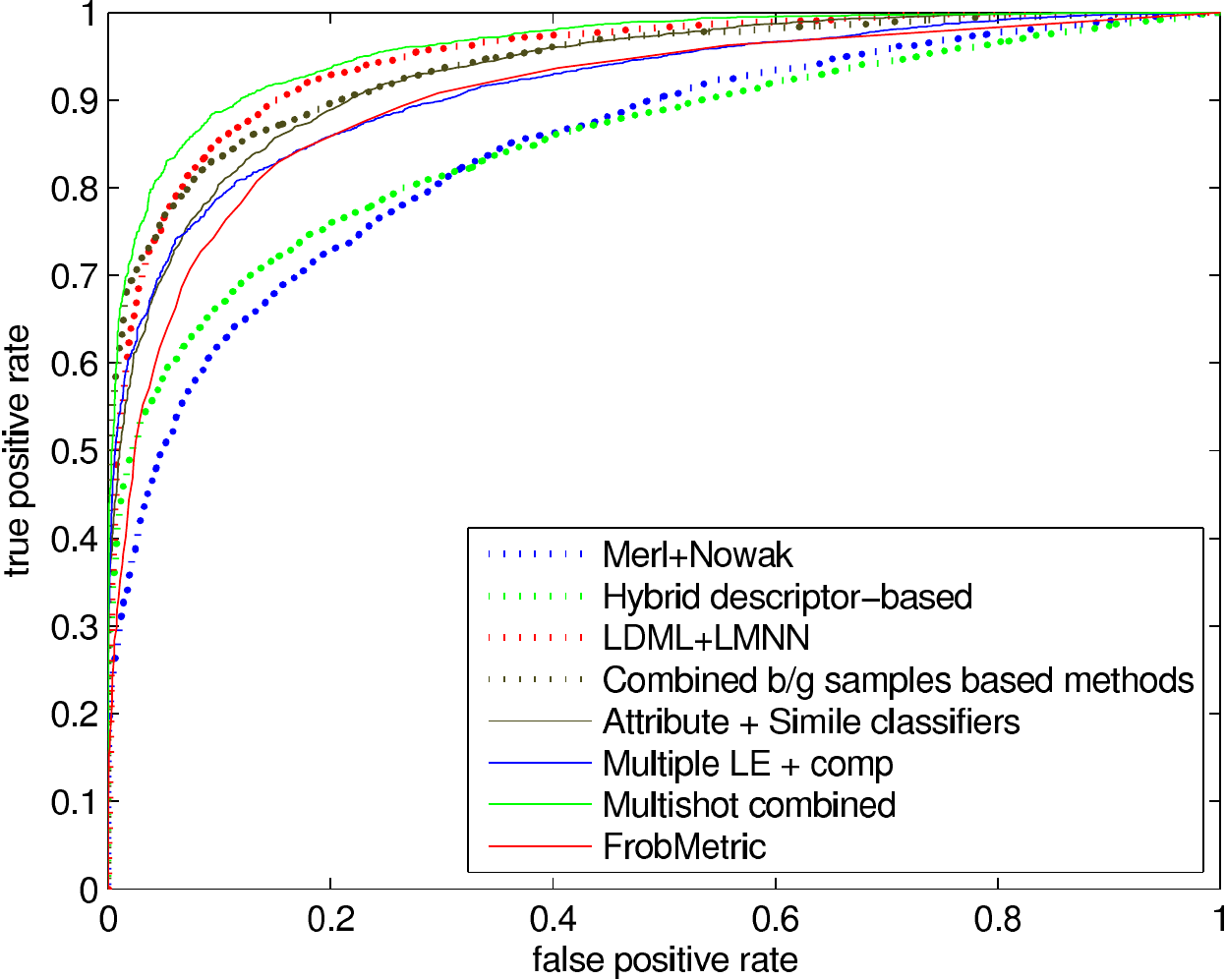}\\(b)
\end{tabular}
\end{center}
\caption{ {(a) ROC Curves that use a single descriptor and
a single classifier. (b) ROC curves that use hybrid descriptors or
single classifiers and FrobMetric's curve. Each point on a curve
is the average over the 10 runs.}} \label{fig:LFW_ROC}
\end{figure}

\textbf{Simple recognition systems with a single descriptor}
    Table~\ref{Table:lfw_error2} %
    shows the performance of 
    FrobMetric's under varying PCA dimensionality and 
    number of triplets. Increasing the number of training triplets gives a
    slight improvement in recognition accuracy.  The dimension after
    PCA has more impact on the final accuracy for this task.
    We also report the CPU time required.

    In Figure~\ref{fig:LFW_ROC} we show ROC curves for FrobMetric and related face 
    recognition algorithms. These curves were generated by altering the threshold 
    value across the distributions of match and
    mismatch similarity scores within M$k$NN. Figure~\ref{fig:LFW_ROC} (a) shows
    methods that use a single descriptor and a single classifier only. As
    can be seen, our system using FrobMetric outperforms all others.

\textbf{Complex recognition systems with one or more
descriptors}
    Figure~\ref{fig:LFW_ROC} (b) plots the performance of more complicated
    recognition systems that use hybrid descriptors or combinations of
    classifiers. See Table \ref{Table:lfw_error} for details.
    \comment{
    Cao~\etal~\cite{Cao10Face} performed a pose-adaptive matching
    applying pose-specific classifiers on LFW, labeled `Single LE +
    holistic' and `Multiple LE + comp' in Figure~\ref{fig:LFW_ROC}.
    Kumar~\etal~\cite{Kumar09Attribute} show `Attribute + Simile
    classifiers' used additional features such as pose and expression.
    Varying pose and facial expression results in larger image
    differences. Therefore it is obvious that knowing such additional
    information would boost the recognition performance. In addition,
    Kumar~\etal collected huge sized labels outside of the LFW dataset.
    }

\begin{table*}[tbh!]
\caption{\text{Test accuracy (\%) on
LFW datasets. ROC curve labels in Figure \ref{fig:LFW_ROC} are described here with details.} }
\centering {
     \renewcommand{\arraystretch}{1.25}
\begin{tabular}{r||l|l}
        \hline
  & SIFT or single descriptor + single classifier & multiple descriptors or classifiers\\
   \arrayrulecolor{tabgrey}
\hline \hline

Turk~\etal~\cite{Matthew91Face} & 60.02 (0.79) & - \\
 & {`Eigenfaces'} &  \\
\hline

Nowak~\etal~\cite{Nowak07Learning} & 73.93 (0.49) & - \\
 & {`Nowak-funneled'} &  \\
\hline

Huang~\etal~\cite{Huang08LFW} & 70.52 (0.60) &  76.18 (0.58)\\
 & {`Merl'} & {`Merl+Nowak'} \\
\hline

Wolf~\etal in 2008\cite{Wolf08Descriptor} & - &  78.47 (0.51)\\
 &  & {`Hybrid descriptor-based'} \\
\hline

Wolf~\etal in 2009\cite{Wolf09Similarity} & 72.02 & 86.83 (0.34) \\
& - & {`Combined b/g samples based methods'}\\
\hline

Pinto~\etal~\cite{Pinto09How} & 79.35 (0.55) &  - \\
 & {`V1-like/MKL'} &  \\
\hline

Taigman~\etal~\cite{Taigman09Multiple} & 83.20 (0.77) & \textbf{89.50 (0.40)}\\
 & - & {`Multishot combined'}\\
\hline

Kumar~\etal~\cite{Kumar09Attribute} & - & 85.29 (1.23)\\
 & & {`attribute + simile classifiers'}\\
\hline

Cao~\etal~\cite{Cao10Face} & 81.22 (0.53) & 84.45 (0.46) \\
 & {`single LE + holistic'} & {`multiple LE + comp'} \\
\hline

Guillaumin~\etal~\cite{Guillaumin09isthat} & 83.2 (0.4) &  87.5 (0.4)\\
 & {`LDML'} & {`LMNN + LDML'}\\
\hline

FrobMetric (this work) & \textbf{84.34 (1.23)} & - \\
 & {`FrobMetric' on SIFT } &  \\
  \arrayrulecolor{black}
\hline

\end{tabular}
} \label{Table:lfw_error}
\end{table*}

\comment{
    Taigman~\etal~\cite{Taigman09Multiple}'s work separated facial
    images using pose information to improve performance as well.
    Moreover, their best performance, `Multishot combined' used 8
    descriptors (SIFT, LBP, TPLBP and FPLBP, and four with square root
    of 4 descriptors) and combined 16 scores using the ITML + Multiple
    OSS ID method and the pose-based multiple shots. Applying SVMs
    classifier, the accuracy of `Multishot combined', 89.50\% became the
    best among computer vision approaches on LFW in the literature.
    However, when their system used Multiple OSS with SIFT only and did
    not apply pose information, the accuracy was decreased to 83.20\%
    while our FrobMetric's best accuracy was 84.34\% as shown in
    Table~\ref{Table:lfw_error}.

    Wolf~\etal~\cite{Wolf09Similarity} used a hybrid descriptor which
    has 10 distances, 10 One-Shot distances, 10 Two-Shot distances, 10
    ranking based distances and 20 additional dimensions using LDA.
    Applying the hybrid descriptors and adding SVM, the paper achieved
    86.83\% accuracy which labeled `Combined b/g samples based methods',
    whereas the accuracy became 72.02\% when single SIFT feature in
    Funneled image was used at this task.

    Guillaumin~\etal~\cite{Guillaumin09isthat}, `LDML + LMNN' in
    Figure~\ref{fig:LFW_ROC} combined 8 scores of 4 descriptors with the
    combination of LDML and LMNN. Their best results was 87.5\% while
    LDML using single descriptor showed 83.20\% accuracy.
    }

    As stated above, the leading algorithms have used either 1) additional
    appearance information, 2) multiple scores from multiple descriptors,
    or 3) complex recognition systems with hybrids of two or more methods.
    In contrast, our system using FrobMetric employs neither a combination of
    other methods nor  multiple descriptors. That is, our
    system exploits a very simple recognition pipeline. 
    The method thus reduces the computational costs associated with 
    extracting the
    descriptors, generating the prior information, training, and
    computing the recognition scores.

    With such a simple metric learning approach, and modest computational cost, 
    it is notable that the method is only slightly outperformed by state-of-the-art hybrid systems
    (test accuracy of $84.34\% \pm 1.23\%$ versus
    $89.50\% \pm 0.40\% $
    on the
    LFW datasets).
    We would expect that the accuracy of the
    FrobMetric approach would improve similarly if more features,  such as local binary pattern (LBP)
    \cite{TPAMILBP} for instance, were used.

		The FrobMetric approach shows better classification performance at a lower computational cost 
		than comparable single descriptor methods.  Despite this level of performance it is 
		surprisingly simple to implement, in comparison to the state-of-the-art.

\subsubsection{Metric learning for action recognition}

In this experiment, we compare the performance of the proposed method with that of existing approaches on two action recognition benchmark data sets, KTH \cite{localsvm} and Weizmann \cite{pami07}. Some examples of the actions are shown
in Figure~\ref{fig:action-examples}. We aim to demonstrate again the advantage of our method in
reducing computational overhead while achieving excellent recognition performance.

The KTH dataset in this experiment consists $2,387$ video sequences. They can be categorized into
six types of human actions including \textit{boxing, hand-clapping, jogging, running, walking and
hand-waving}. These actions are conducted by $25$ subjects and each action is performed multiple
times by a same subject. The length of each video is about four seconds at $25$ fps, and the
resolution of each frame is $160\times{120}$. 
We randomly split all the video sequences based on
the subjects into $10$ pairs, each of which contains all the sequences from $16$ subjects for
training and those from the remaining $9$ subjects for test. 
The space-time interest points (STIP)
\cite{Laptev08} were extracted from each video sequence and the
the corresponding descriptors calculated.
The descriptors extracted from all training sequences were clustered into $4,000$
clusters using $k$-means, with the cluster centers used to form a visual codebook.
Accordingly, each video sequence is characterized by a $4000$-dimensional histogram indicating
the occurrence of each visual word in this sequence. To achieve a compact and discriminative
representation, a recently proposed visual word merging algorithm, called AIB
\cite{FulkersonECCV08}, is
applied to merge the histogram bins to reduce the dimensionality. Subsequently each video sequence is
represented by a $500$-dimensional histogram.

The Weizmann data set contains temporal segmentations of video sequences into ten types of human actions
including \textit{running, walking, skipping, jumping-jack, jumping-forward-on-two-legs,
jumping-in-place-on-two-legs, galloping-sideways, waving-two-hands, waving-one-hand, and bending}.
The actions are performed by $9$ actors. The action video sequences are represented by
space-time shape features such as space-time ``saliency'', degree of ``plateness'', and degree of
``stickness'' which compute the degree of location and orientation movement in space-time domain by
a Poisson equation \cite{Tran08}. This leads to a $286$-dimensional feature
vector for each action video sequence, which is as  in
\cite{Tran08}. In this experiment, 
$70\%$ sequences are used for training
and the remaining $30\%$ for testing.

The experimental results are shown in Table~\ref{Table:exp-action-recognition}. The first part of
the table shows the experimental setting and the second  compares the results of various
metric learning methods. On the KTH data set, the proposed method, FrobMetric,
performs almost as well as BoostMetric with an error rate of $7.03\pm{1.46}\%$,
and outperforms all others.
In doing so FrobMetric requires only $289.58$ seconds to
complete the metric learning, which is approximately one quarter of the time required by the 
fastest competing method (which has more than double the error rate).
On the Weizmann data set, the error rate of FrobMetric is
$0.59\pm{0.20}\%$, which is the second-best among all the compared methods. It is slightly higher
than (but still comparable to) the lowest one $0.30\pm{0.09}\%$ obtained by LMNN. However, in
terms of computational efficiency, FrobMetric requres approximately one eighth of the time used by LMNN, 
and is the fastest of the methods compared. These results demonstrate the
computational efficiency and the excellent classification performance of the proposed method in
action recognition.

\begin{figure}[!t]
\centering
    \fbox{
    \begin{tabular}{ccc}
\includegraphics[width=0.13\textwidth]{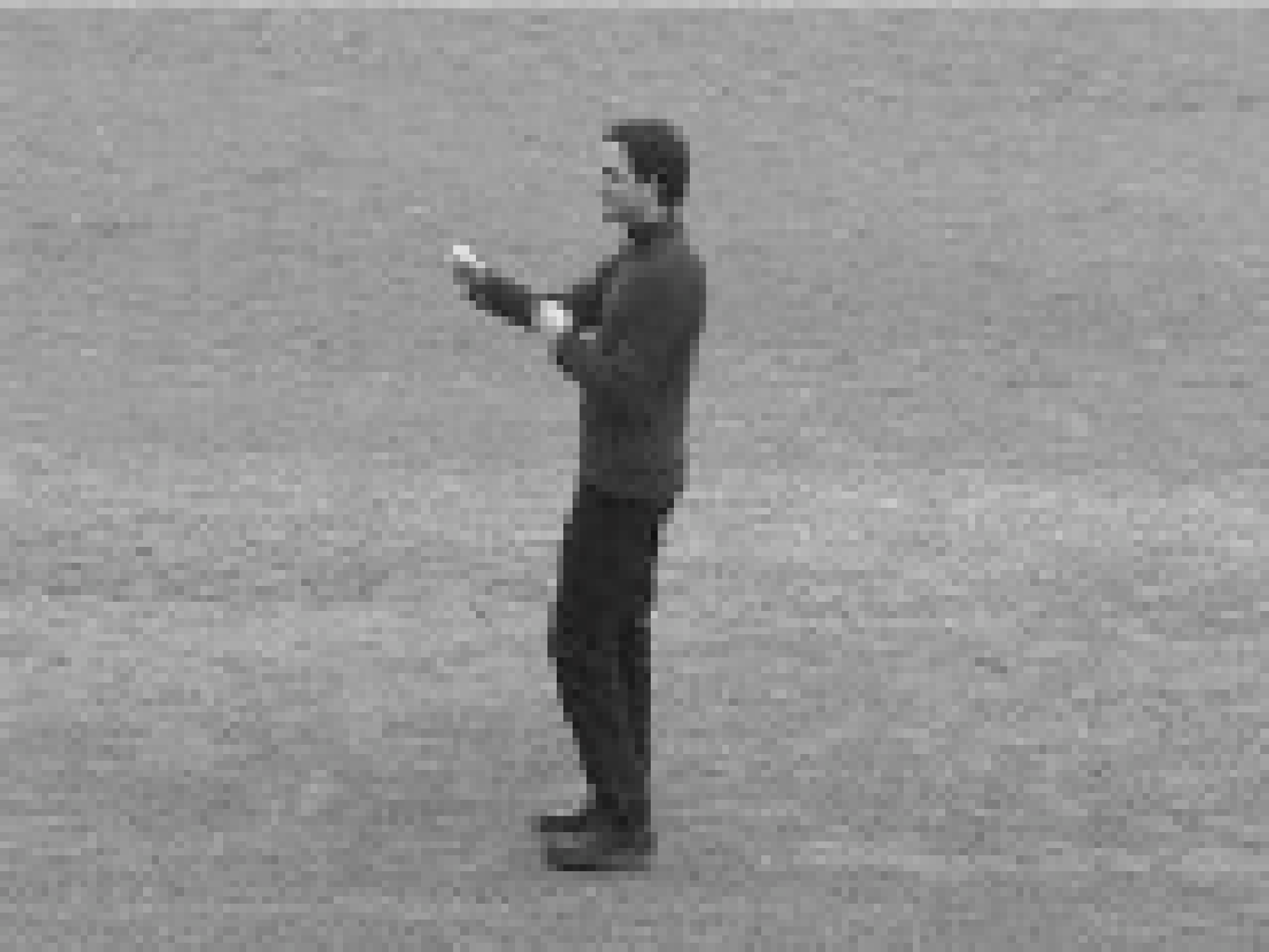} &
\includegraphics[width=0.13\textwidth]{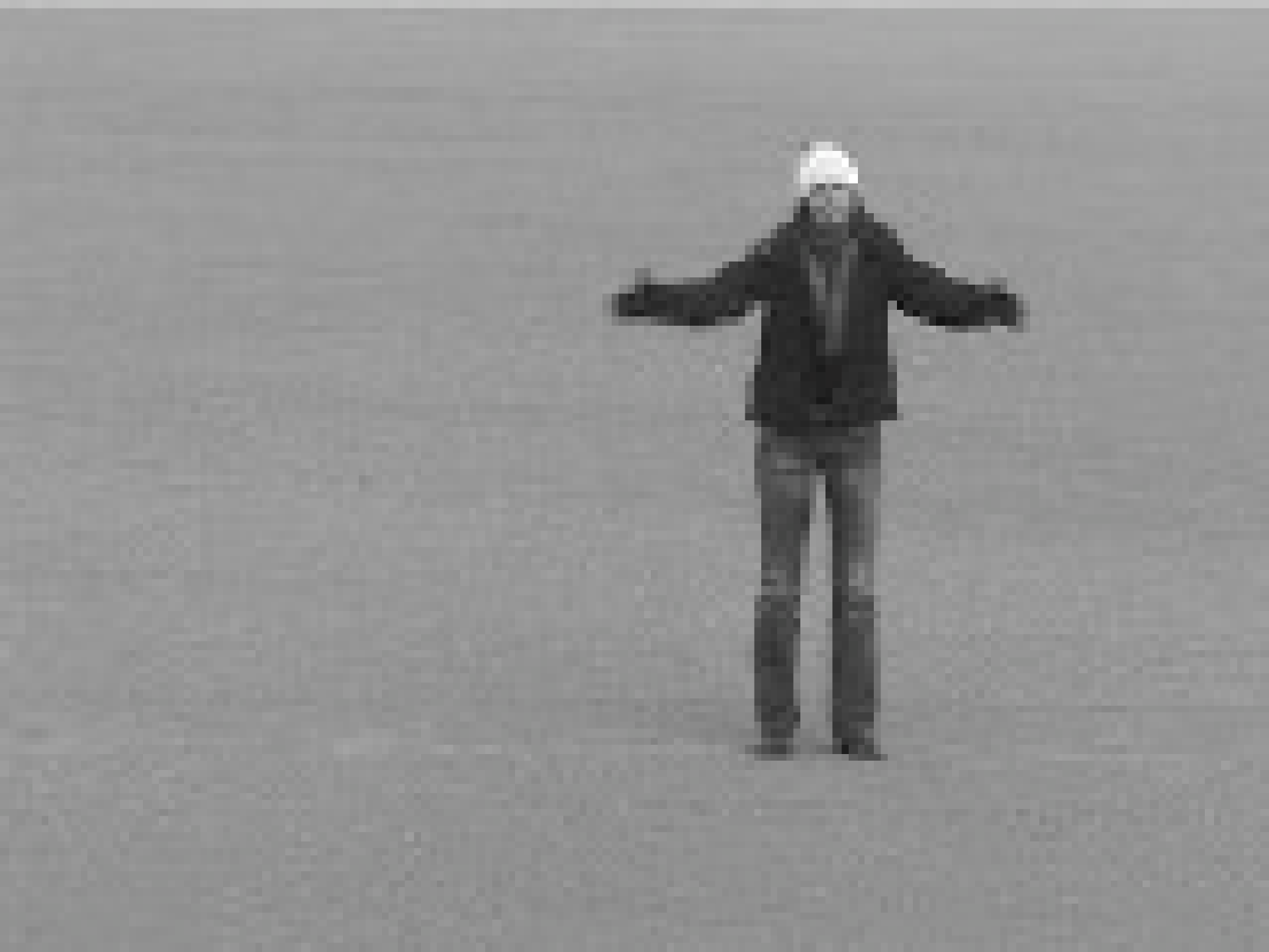} &
\includegraphics[width=0.13\textwidth]{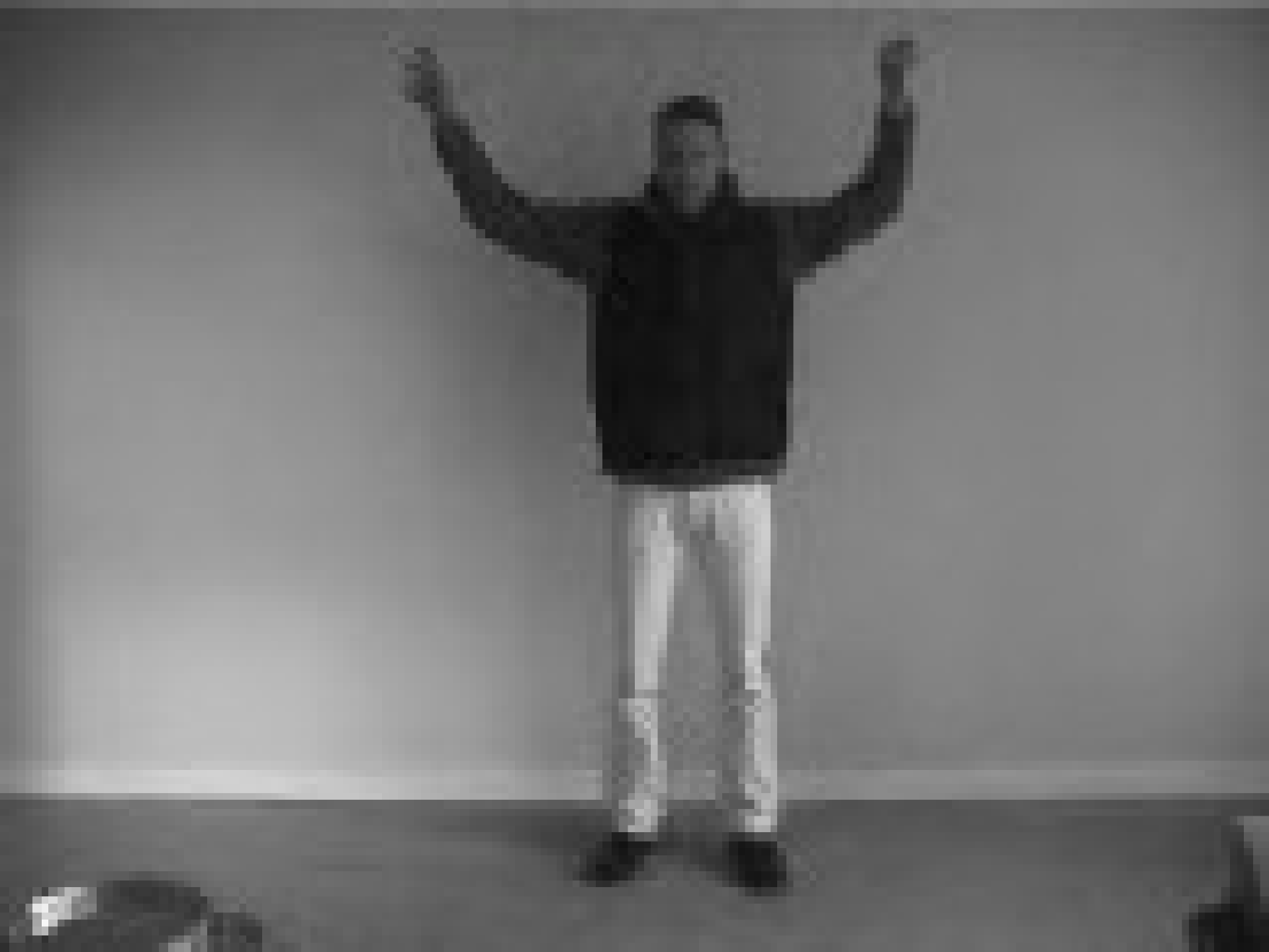}  \\
\scriptsize boxing & \scriptsize handclapping & \scriptsize handwaving \\
\includegraphics[width=0.13\textwidth]{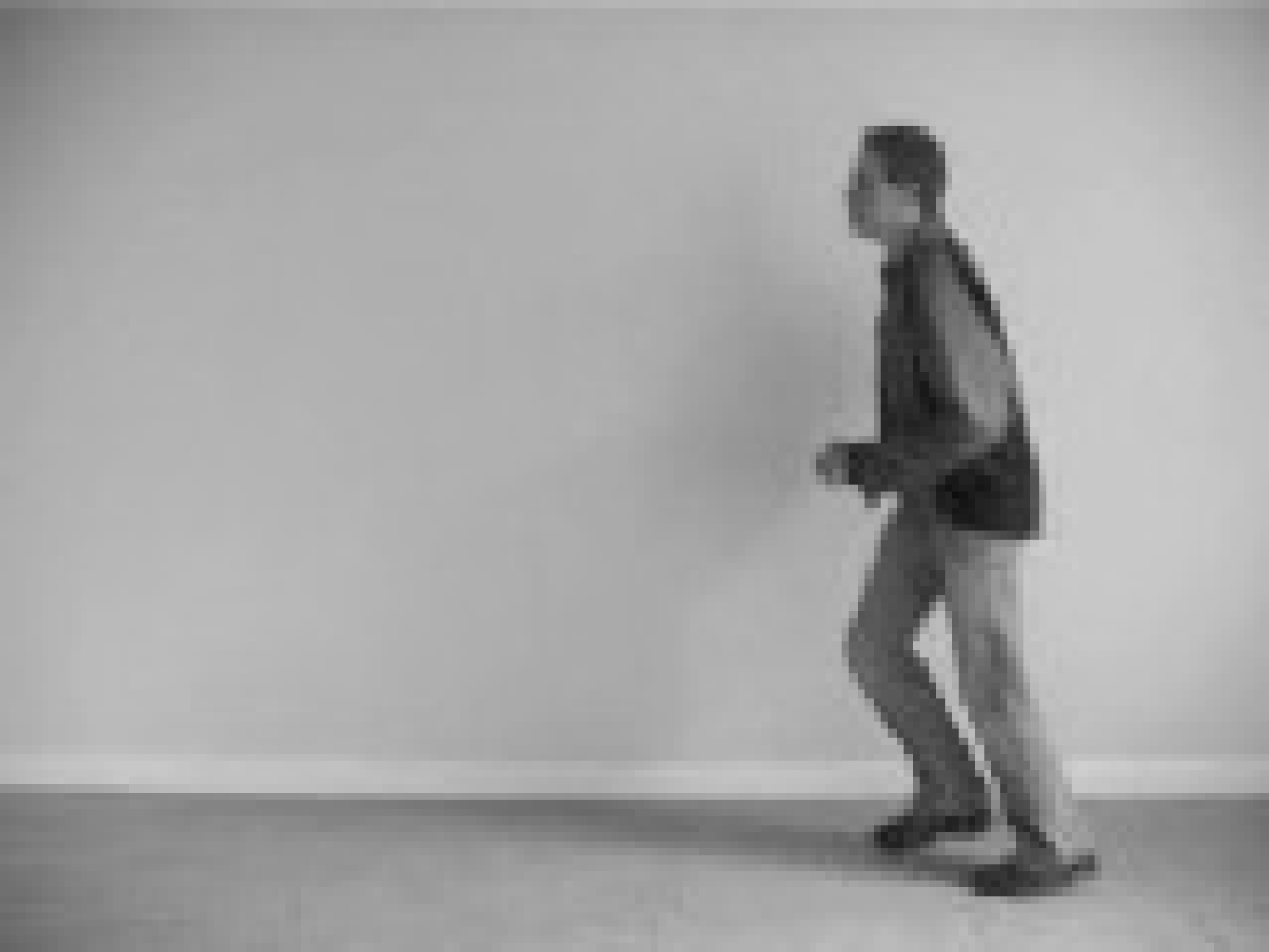} &
\includegraphics[width=0.13\textwidth]{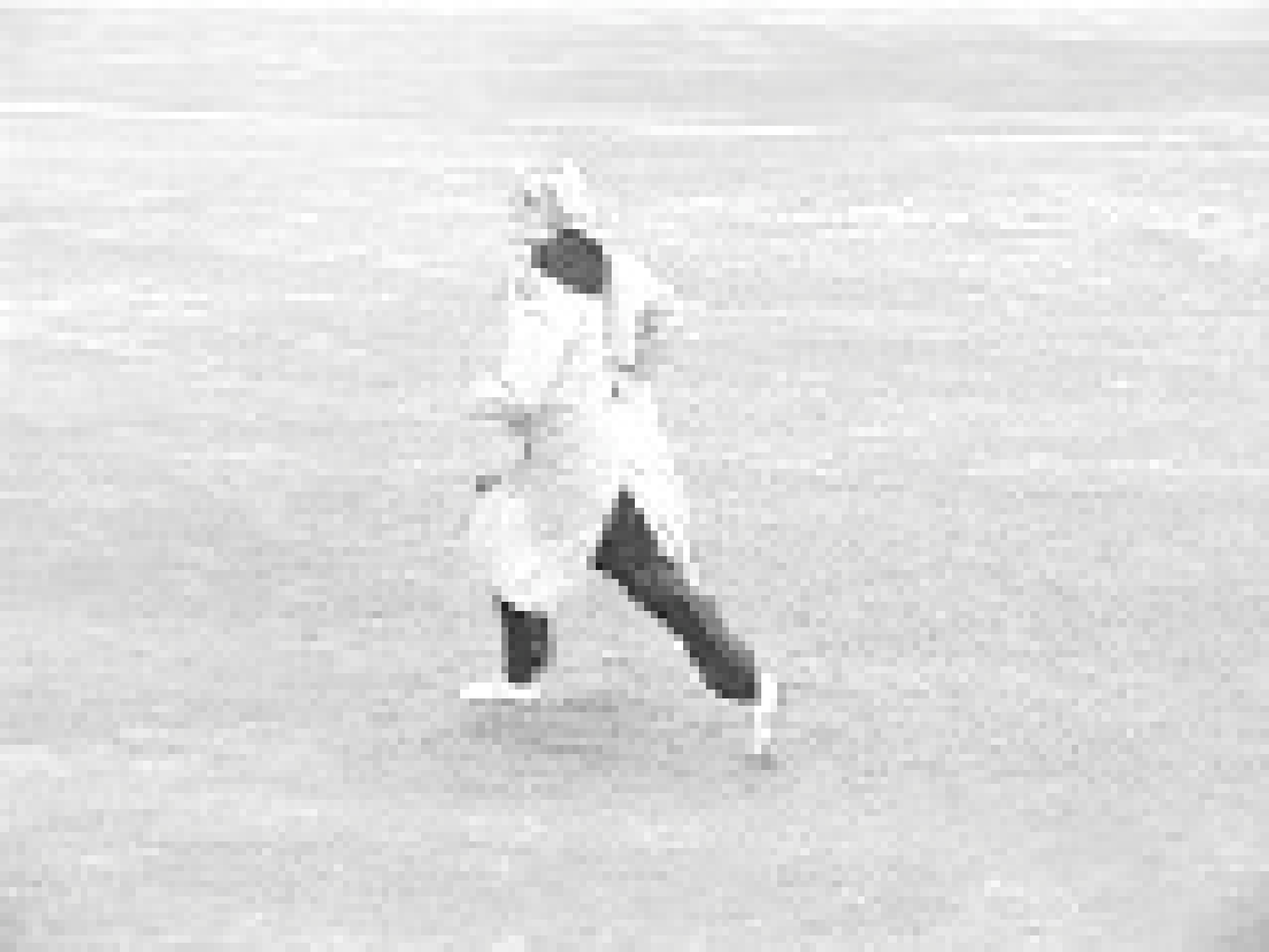} &
\includegraphics[width=0.13\textwidth]{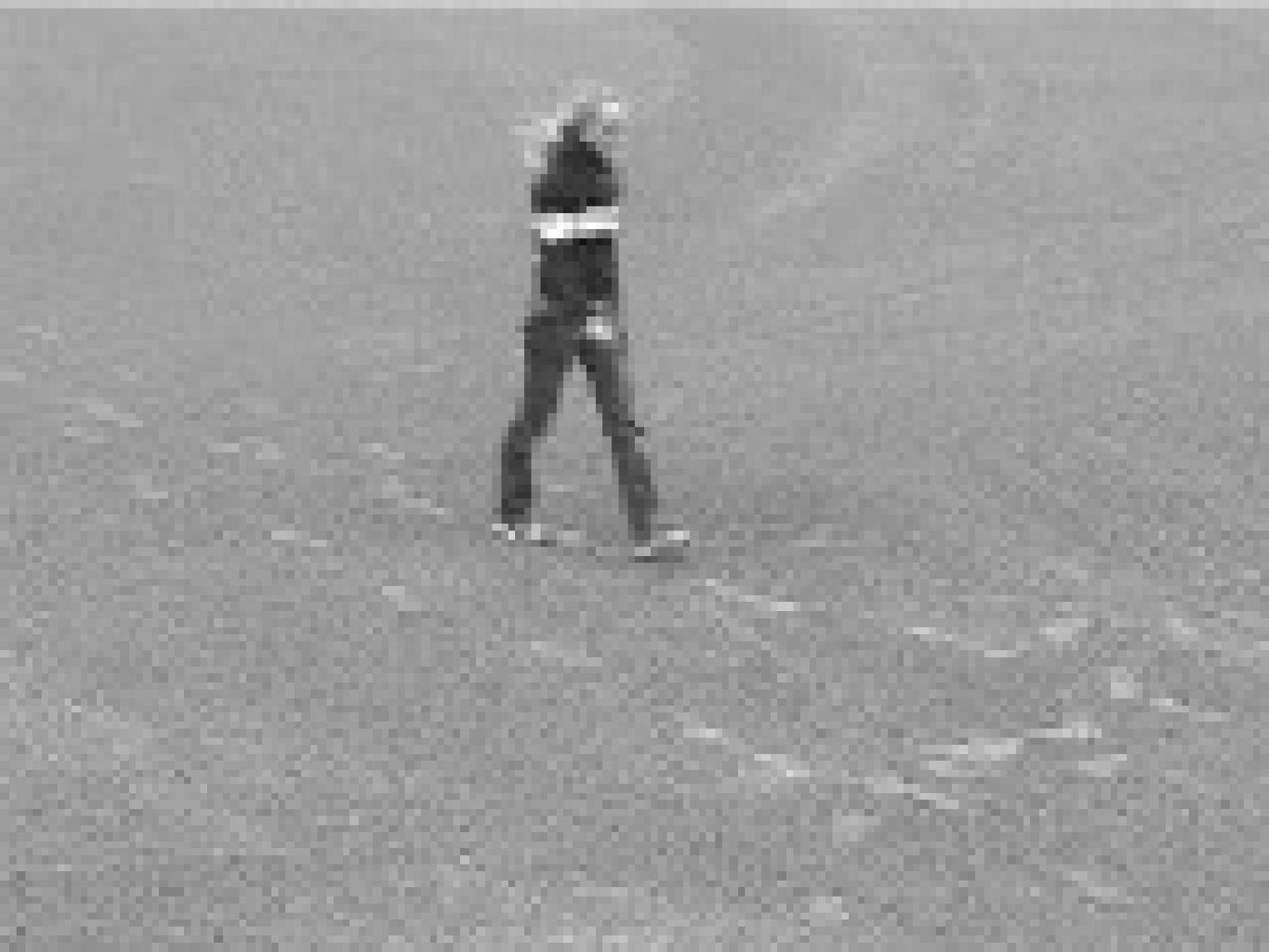} \\
 \scriptsize jogging & \scriptsize running & \scriptsize walking
\end{tabular}
}
\caption{Examples of the actions from the KTH action dataset \cite{localsvm}.}
\label{fig:action-examples}
\end{figure}

\begin{table}
\begin{center}
\caption{Comparison of FrobMetric and other metric learning methods on action recognition datasets with $3$-NN (Standard deviation is
reported for the datasets having multiple runs). }

\label{Table:exp-action-recognition}

\centering \footnotesize
\begin{tabular}{l|l||c|c}
\hline
     \multicolumn{2}{l||}{}  & KTH & Weizmann\\
\hline\hline

\multicolumn{2}{l||}{\# samples}   & 2,387 &  5,594\\\hline

\multicolumn{2}{l||}{\# triplets}  & 13,761 & 35,280 \\\hline

\multicolumn{2}{l||}{dimension}  & 500    &  286 \\\hline

\multicolumn{2}{l||}{\# training} & 1,529 & 3,920 \\\hline

\multicolumn{2}{l||}{ test } & 858  & 1,674 \\\hline

\multicolumn{2}{l||}{\# classes} & 6     & 10 \\\hline

\multicolumn{2}{l||}{\# runs}  & 10      &  10 \\

\hline \hline

\multirow{6}{*}{\textbf{Error Rates $ \%$}} &
Euclidean & 10.55 (2.46) &  1.14 (0.19)\\
\cline{2-4}

&RCA & 21.05 (3.86) &  3.21 (0.66)\\
\cline{2-4}

&LMNN  & 15.72 (2.57) & 0.30 (0.09)  \\
\cline{2-4}

&ITML  &   27.67 (1.47) & 1.06 (0.16)
\\ \cline{2-4}

&BoostMetric  & 7.05 (1.42) & 0.85 (0.31) \\
\cline{2-4}

&FrobMetric & 7.03 (1.46) & 0.59 (0.20) \\

\hline \hline

\multirow{4}{*}{\textbf{Comp. Time}}  &

LMNN & 1023.89s & 1343.25s \\
\cline{2-4}

&ITML &  1004.94s & 368.68s \\
\cline{2-4}

&BoostMetric & 4048.67 & 1139.02s \\
\cline{2-4}

&FrobMetric & 289.58s & 169.30s \\
\hline

\end{tabular}
\end{center}
\end{table}

{
\subsection{Maximum variance unfolding}

\begin{figure}[htb!]
    \centering
    \begin{tabular}{c}
    \includegraphics[width=0.3\textwidth]{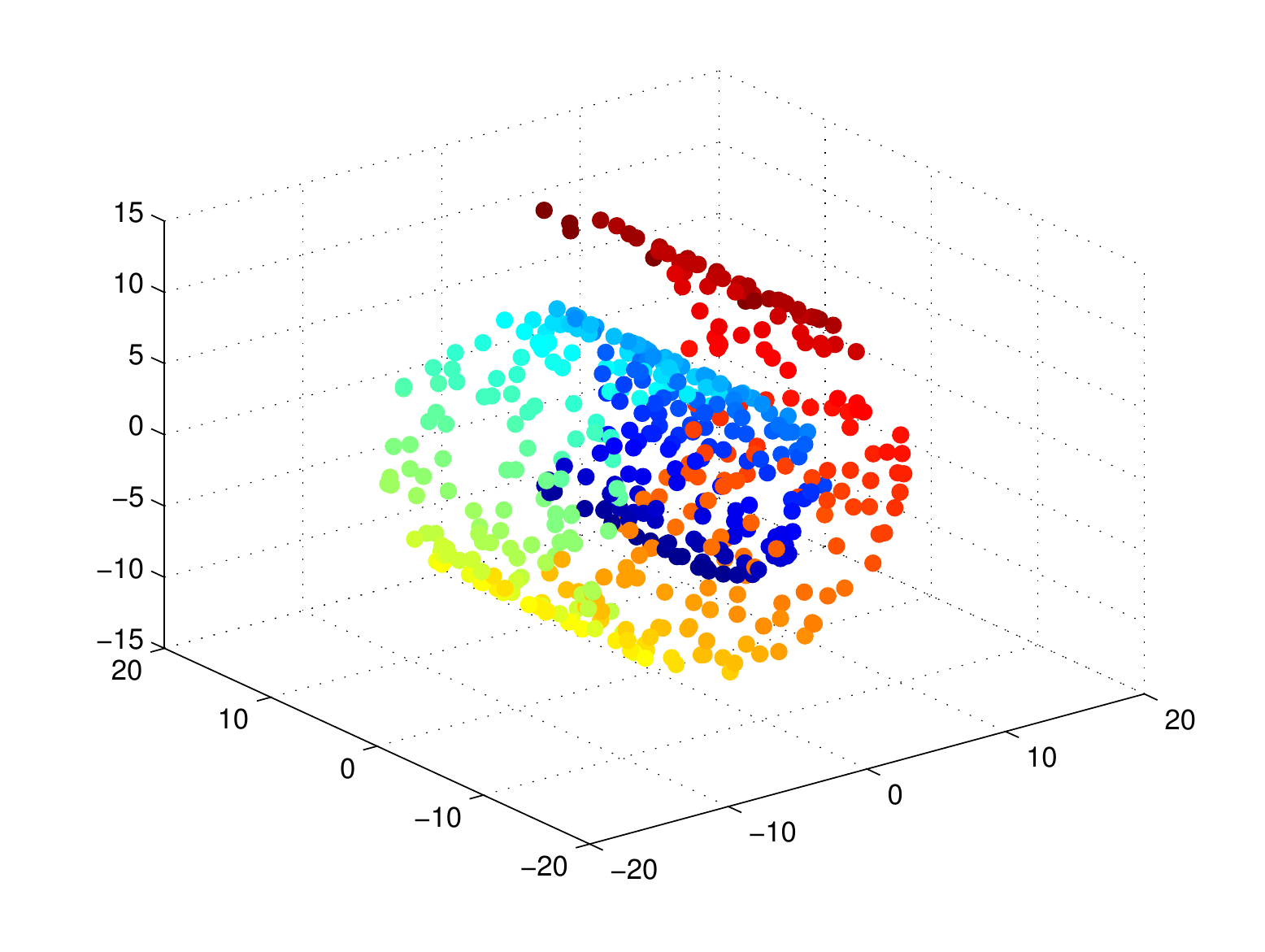} \\
    original data
    \end{tabular}
    \begin{tabular}{cc}
    \includegraphics[width=0.22\textwidth]{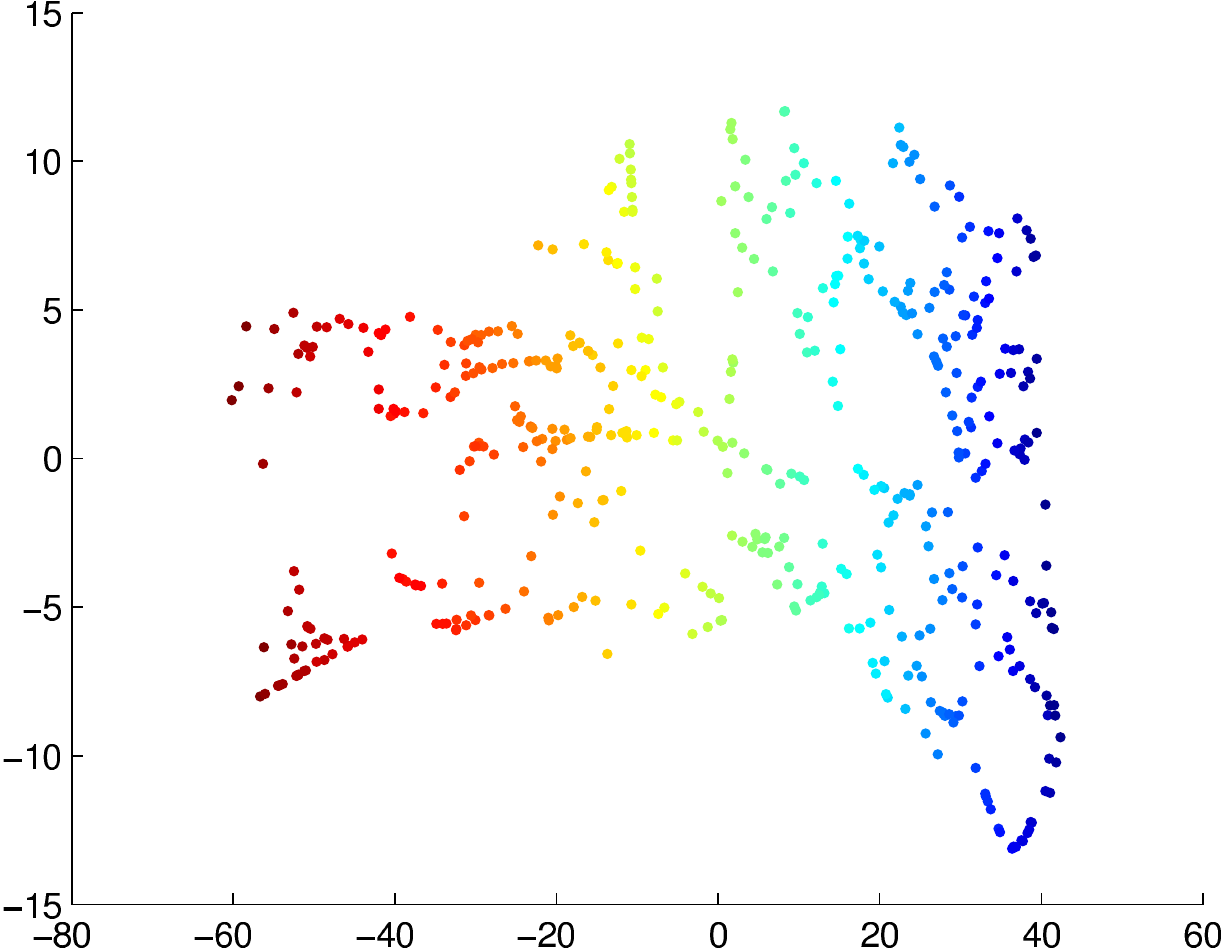} &
    \includegraphics[width=0.22\textwidth]{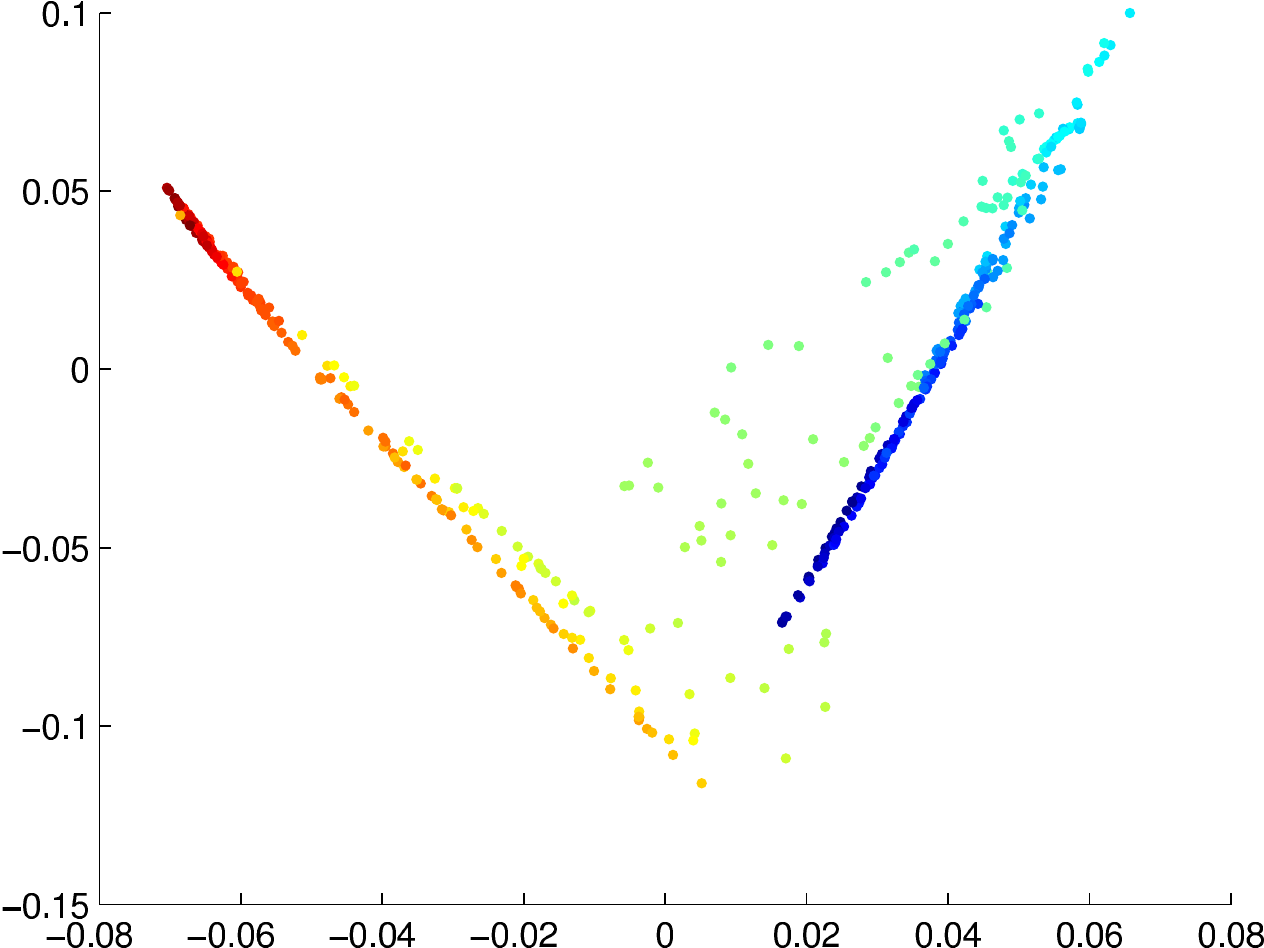}
    \\ (a) & (b)
    \end{tabular}
    \begin{tabular}{cc}
    \includegraphics[width=0.22\textwidth]{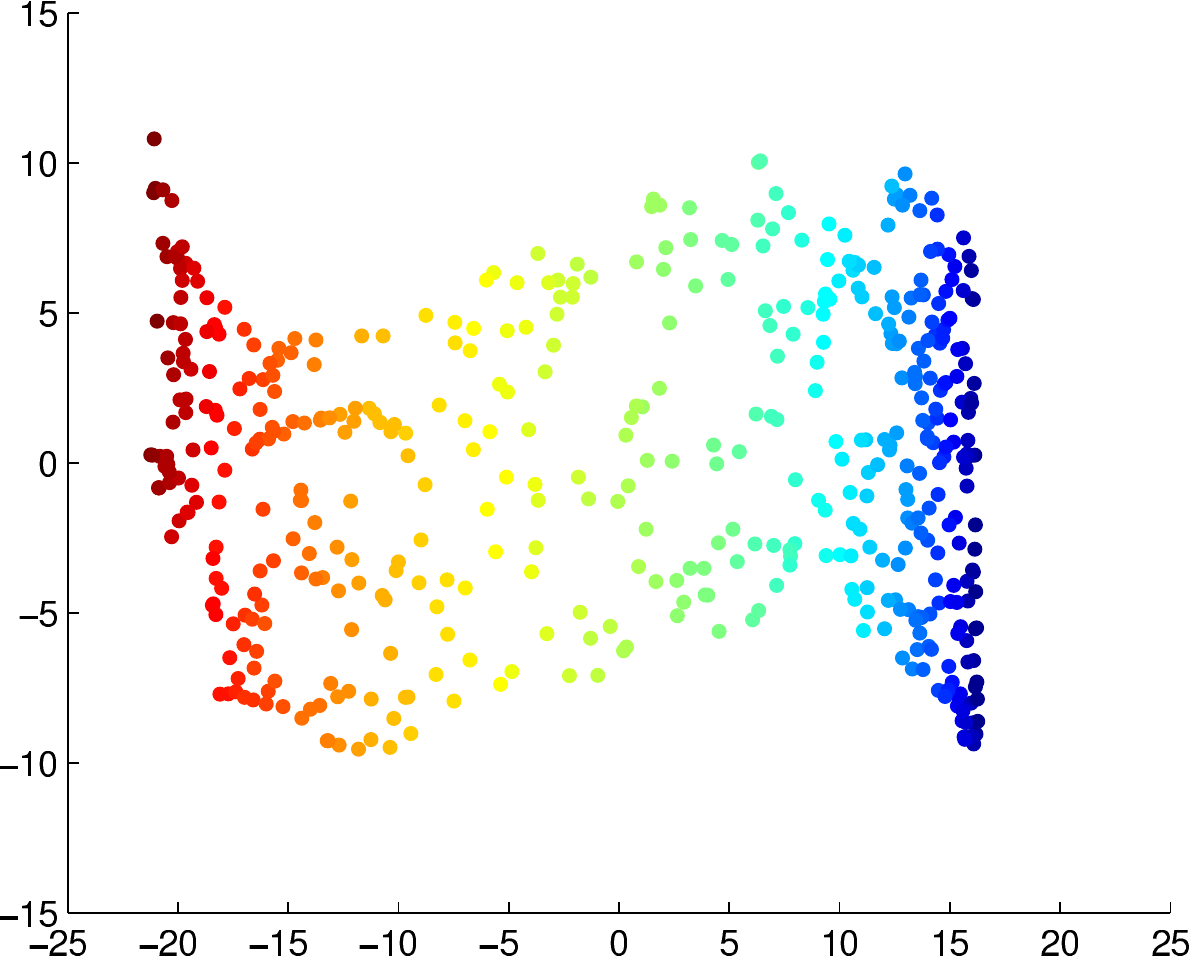} &
    \includegraphics[width=0.22\textwidth]{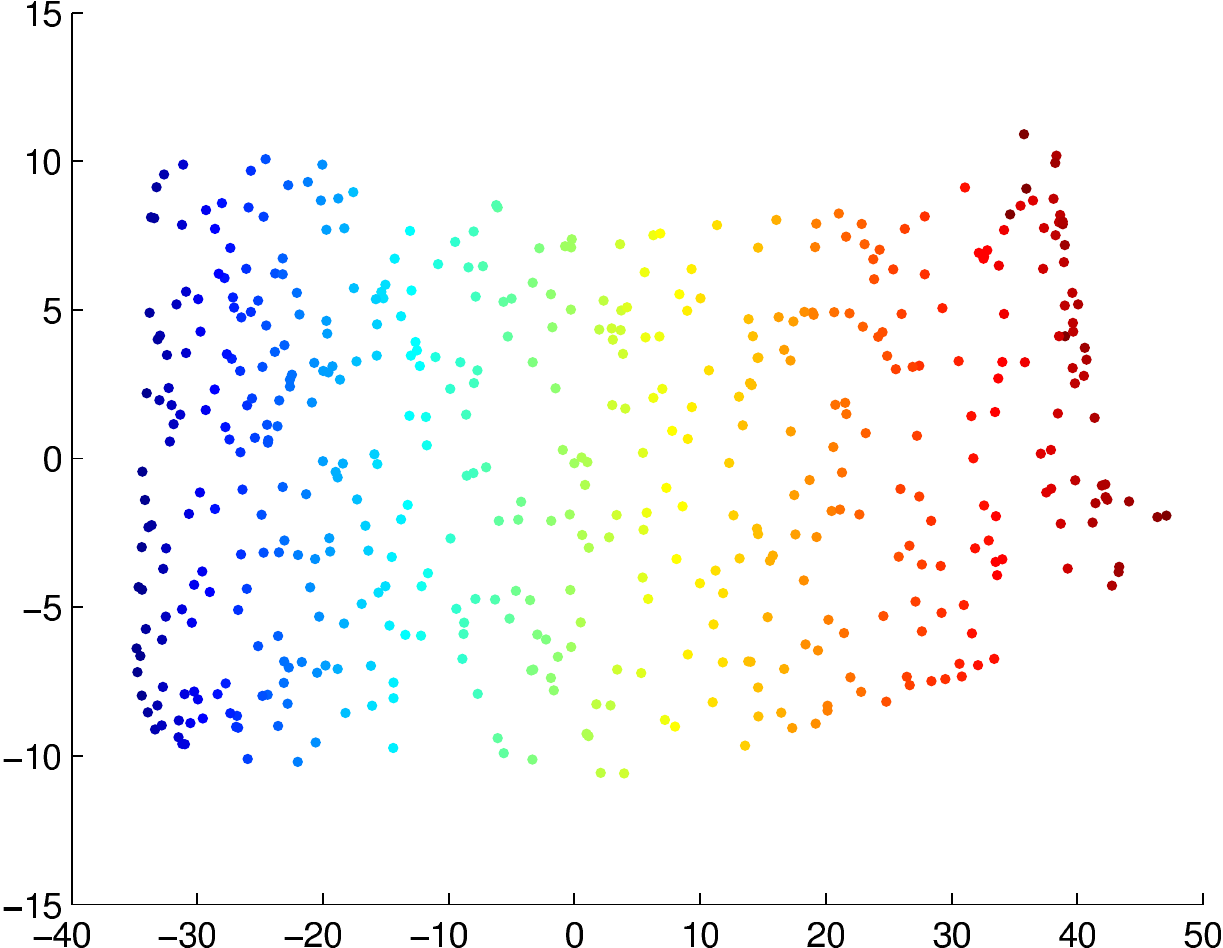}
    \\ (c) & (d)
    \end{tabular}
    \caption{Embedding results of different methods on the 3D
    swiss-roll dataset, with the neighborhood size $k=6$ for all,
    and $\sigma = 10^{5}$ for our method.
    (a) Isomap, (b) LLE, (c) Our method, (d) MVU.}
    \label{fig:swiss500}
\end{figure}

In this section, we run MVU experiments on a few datasets and compare
with other embedding methods. 
Figure~\ref{fig:swiss500} shows the embedding results for several
different methods,
namely, isometric mapping (Isomap) \cite{isomap},
locally linear embedding (LLE) \cite{lle}
and MVU \cite{mvu}
on the 3D swiss-roll with 500 points.
We use $ k = 6 $ nearest neighbors to construct the local distance constraints
and  set $ \sigma = 10^5 $.
 We have also applied our method to  the teapot and face image
 datasets from~\cite{mvu}.
The teapot set contains $200$ images obtained by rotating a teapot through $360^\circ$. Each image is
of $101 \times 76$ pixels. Figure~\ref{fig:teapot} shows the two dimensional
embedding results of our method and MVU.
As can be seen, both methods preserve the order of teapot
images corresponding to the angles from which the images were taken,
and produce
plausible embeddings. But in terms of running time, our algorithm is more than an order of magnitude faster than
MVU, requiring only $4$ seconds to run using $k = 6$ and
$\sigma = 10^{10}$, while MVU required $85$ seconds.

\begin{figure}[t]
\centering
\includegraphics[width=0.4\textwidth]{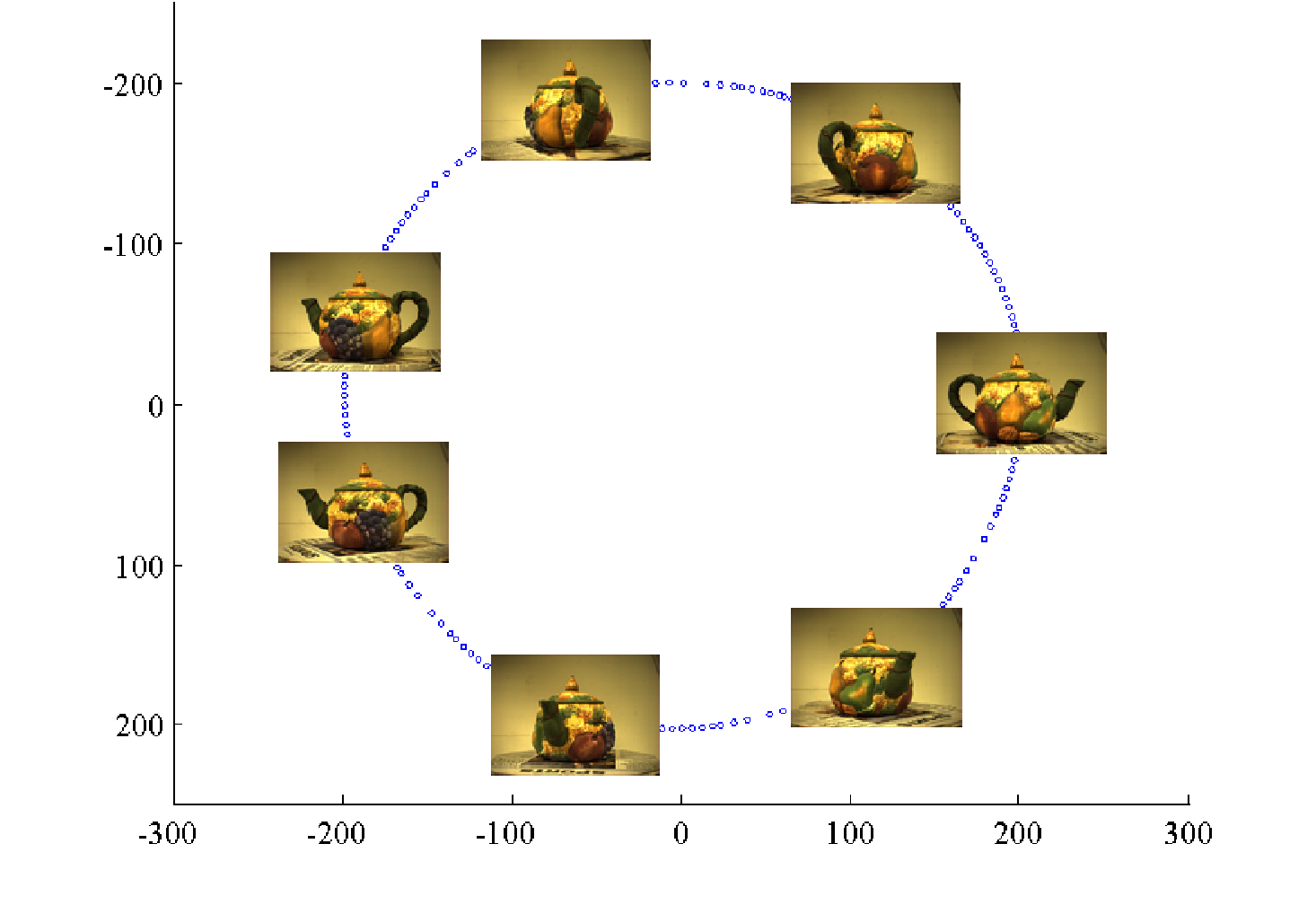}
\includegraphics[width=0.4\textwidth]{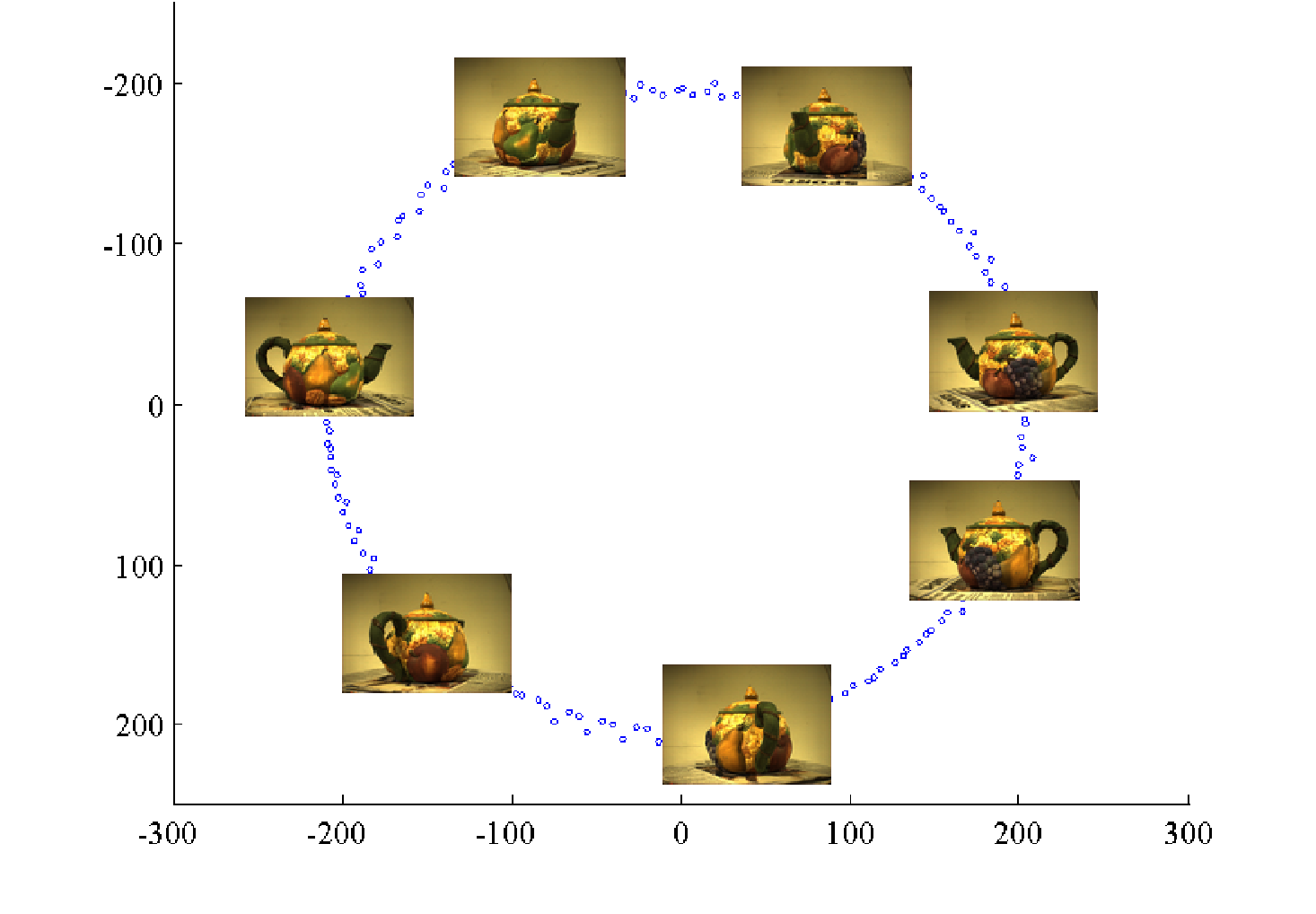}
\caption[Embedding results of our method and MVU on the teapot
dataset]{Embedding results of our method and MVU on the teapot dataset;
 (top) our results with $\sigma = 10^{10}$ and
(bottom) MVU's results.} \label{fig:teapot}
\end{figure}
Figure~\ref{fig:face_mvu} shows a two-dimensional embedding of the images from the face
dataset. The set contains $1,965$ images (at $28 \times 20$ pixels) of the same individual from different
views and with differing expressions. The proposed method required $131$ seconds
to solve this metric using $k = 5$ nearest
neighbors whereas the original MVU needed $4732$ seconds.
\begin{figure}[htb!]
\begin{center}
\includegraphics[width=0.5\textwidth]{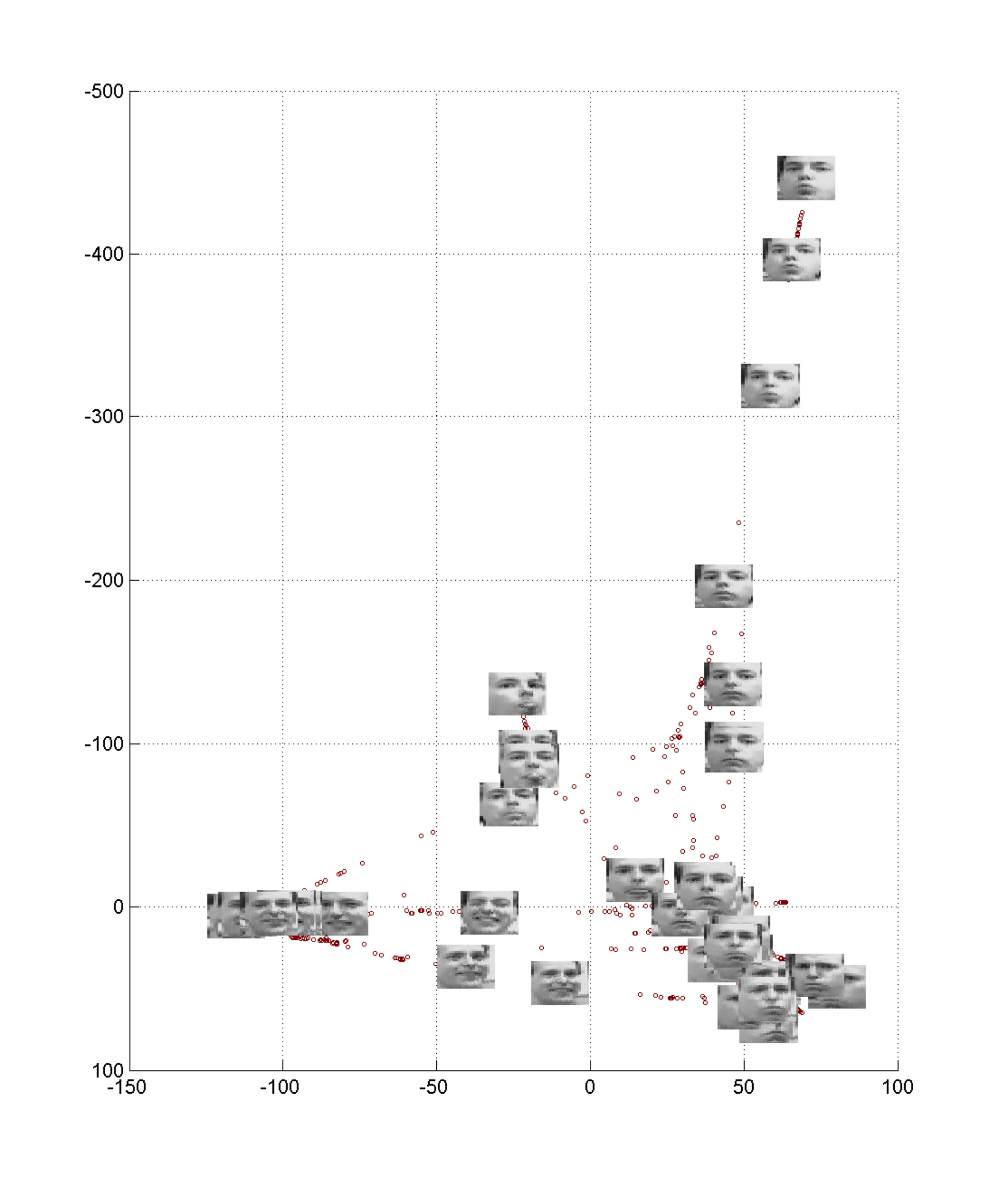}
\end{center}
\caption{2D embedding of face data by our approach.}
\label{fig:face_mvu}
\end{figure}
\subsubsection{Quantitative Assessment}

To better illustrate the effectiveness of our method, here
we provide a quantitative evaluation of the embeddings generated for the 3D swiss-roll and teapot
datasets.     
Specifically, we adopt two quality mapping
indexes, the unweighted $Q_{nx}$ and $B_{nx}$ \cite{LeeV08jmlr}, to measure the
K-ary neighborhood preservation between the high and low dimensional spaces.
$Q_{nx}$ represents the proportion of points that remain inside the
K-neighborhood after projection,
and thus larger $Q_{nx}$ indicates better neighborhood preservation.
$B_{nx}$ is defined as the difference in the fractions of mild
K-extrusions and mild K-intrusions.
It indicates the ``behavior" of a dimensionality reduction method, namely,
whether it tends to produce an ``intrusive" ($B_{nx}(K)>0$)
or ``extrusive" ($B_{nx}(K)<0$) embedding.
Intrusive embedding tends to crush the manifold, which means faraway points
can become neighbors after embedding, while extrusive one tends to tear the manifold,
meaning some close neighbors can be embedded faraway from each other.
In an ideal projection, $B_{nx}$ should be zero.
See \cite{LeeV08jmlr} for more details.

    The comparison of LLE, Isomap, MVU and our proposed  method on the
    teapot (with $\sigma = 10^{10}$) and the swiss roll ($\sigma =
    10^{5}$) datasets are shown in Figure~\ref{fig:QB_S500} and
    Figure~\ref{fig:QB_T200}. As can be seen from
    Figure~\ref{fig:QB_S500} and Figure~\ref{fig:QB_T200} (a), the
    proposed FrobMetric method performs on par with MVU, while better
    than both Isomap and LLE in terms of neighborhood preservation.
    Note that all methods tend to tear the manifold as $B_{nx}(K)$ is
    below zero in all cases.

\begin{figure}[htb!]
\begin{center}
\begin{tabular}{c}
\includegraphics[width=0.4\textwidth]{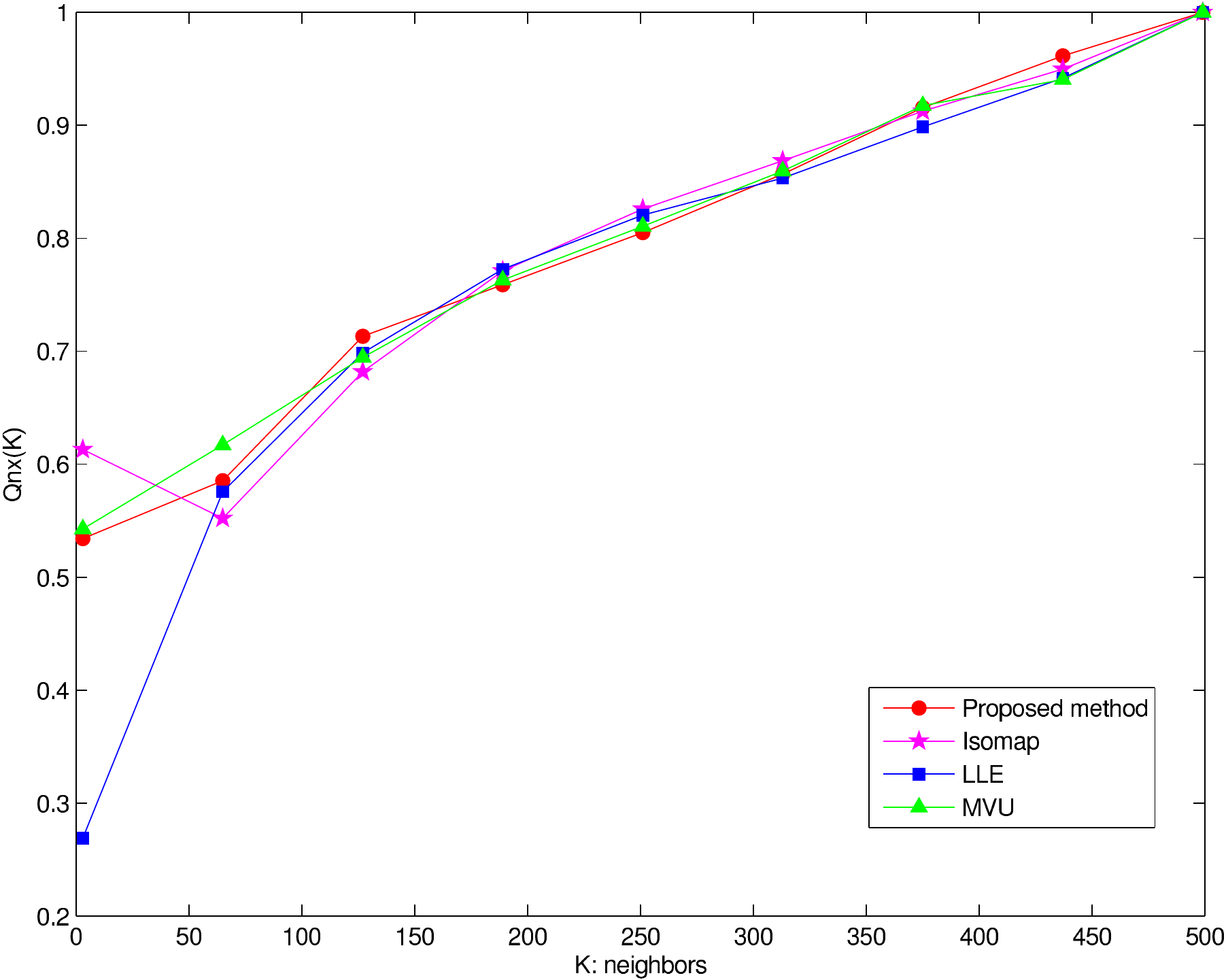}\\(a)
\\
\includegraphics[width=0.4\textwidth]{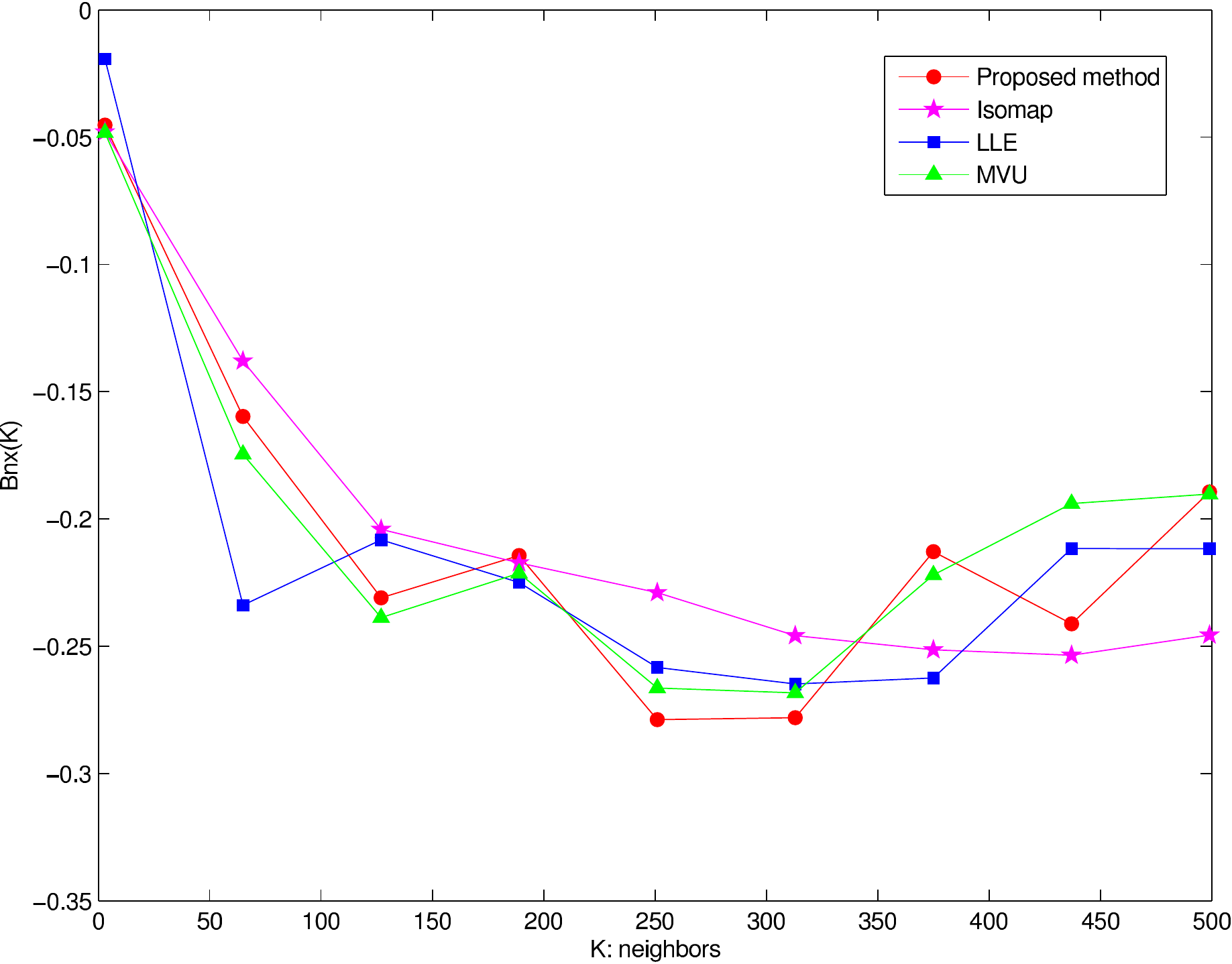}\\(b)
\end{tabular}
\end{center}
\caption[Quality assessment of neighborhood preservation of different
algorithms on the
3D swiss-roll data
]{Quality assessment of neighborhood preservation of different algorithms on 3D swiss roll;
(a) $Q_{nx}(K)$;(b) $B_{nx}(K)$.} \label{fig:QB_S500}
\end{figure}

\begin{figure}[htb!]
\begin{center}
\begin{tabular}{c}
\includegraphics[width=0.4\textwidth]{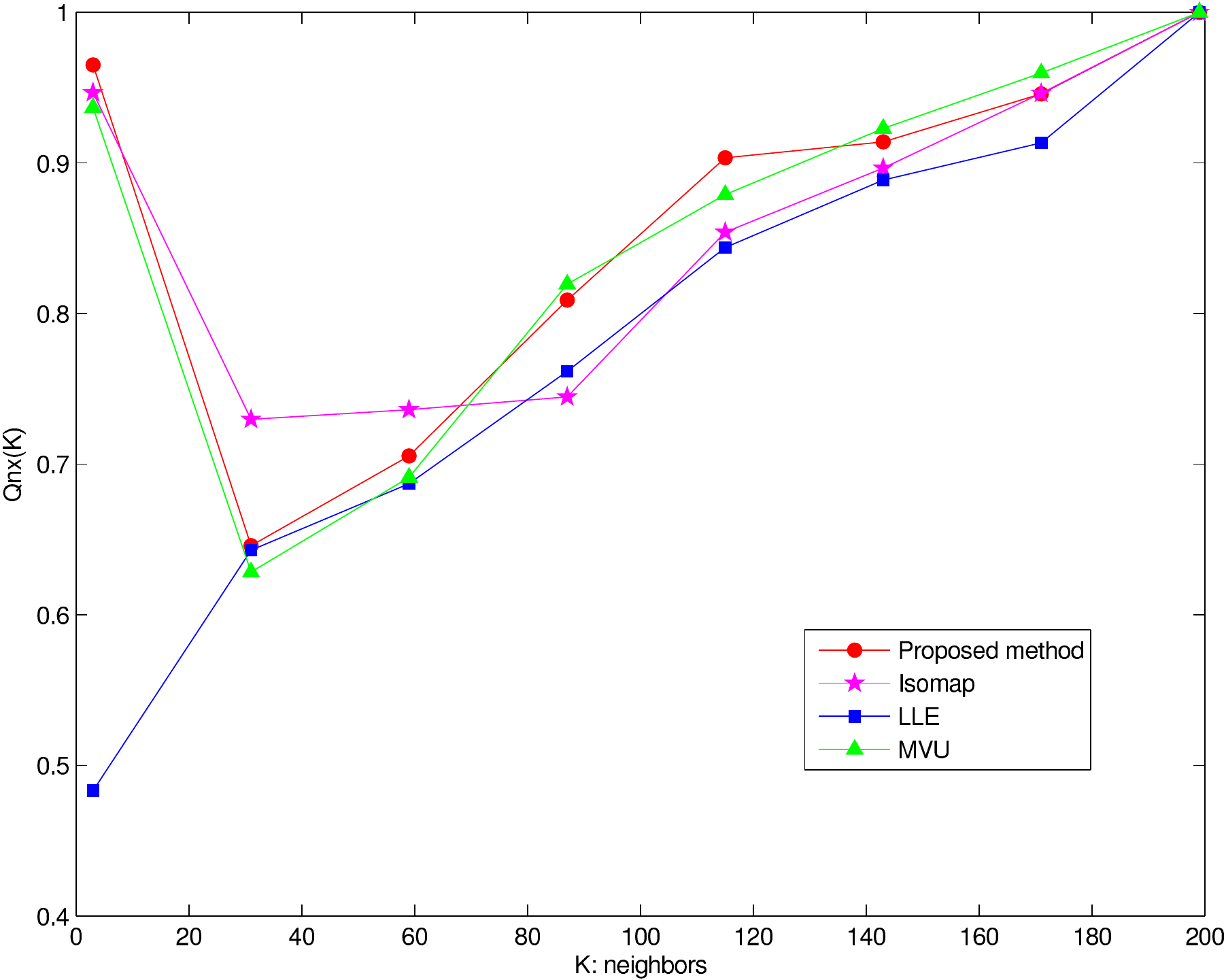}\\(a)
\\
\includegraphics[width=0.4\textwidth]{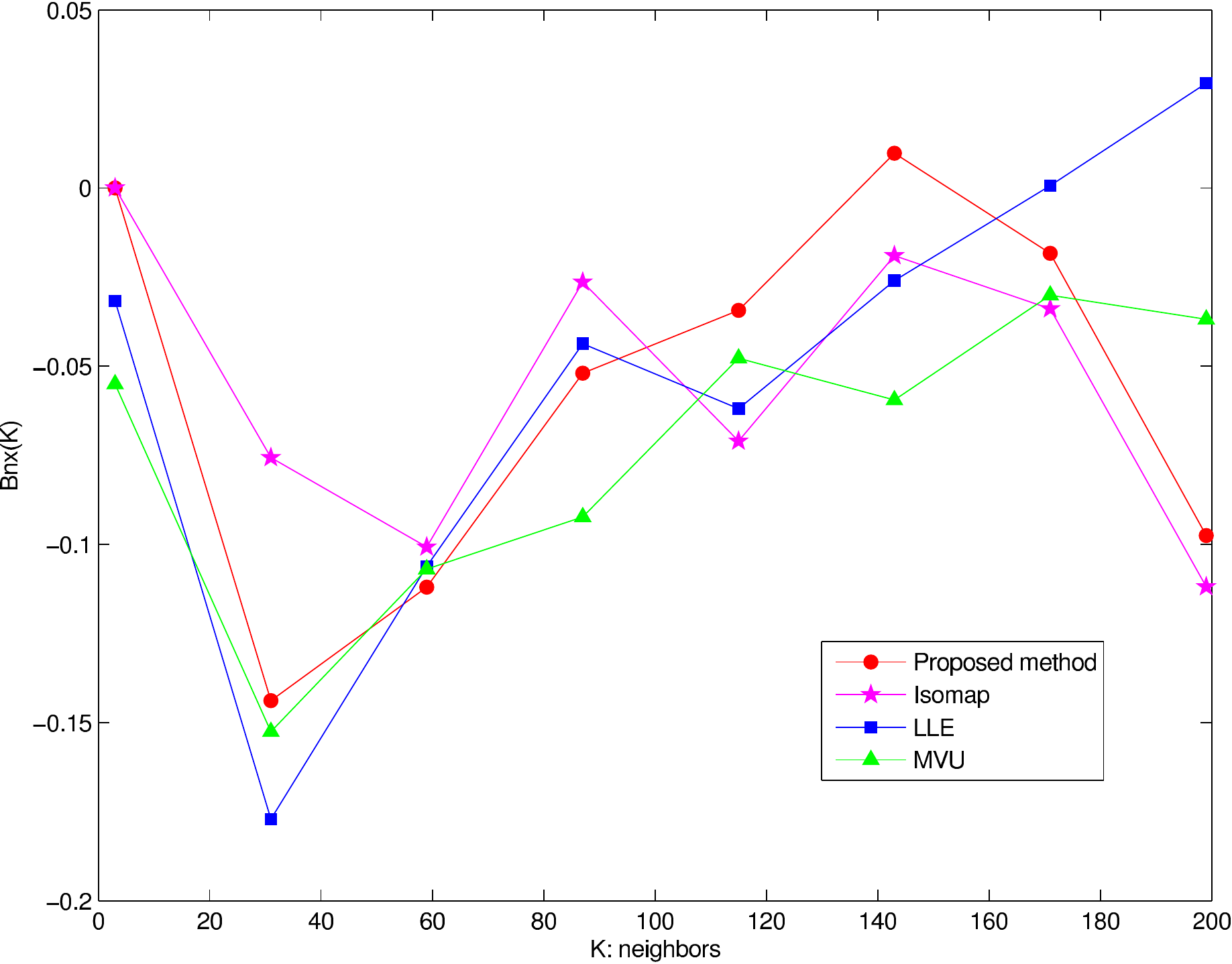}\\(b)
\end{tabular}
\end{center}
\caption[Quality assessment of neighborhood preservation of different algorithms on teapot
dataset]{Quality assessment of neighborhood preservation of
different algorithms on teapot dataset; (a)
$Q_{nx}(K)$;
(b) $B_{nx}(K)$.} \label{fig:QB_T200}
\end{figure}

We have also made quantitative analysis of the proposed algorithm based on Zhang \etal~\cite{ZhangWHZ11TNN}.
They proposed several quantitative criteria, specifically, average local standard deviation
(ALSTD)
and average local extreme deviation (ALED) to measure the global smoothness of a recovered
low-dimensional manifold; average local co-directional consistence (ALCD) to estimate the
average co-directional consistence of the principle spread direction (PSD) of the data points,
and a combined criteria to simultaneously evaluate the global smoothness and co-directional
consistence (GSCD).

We give the visual results of swiss-roll dataset based on PSD in Figure~\ref{fig:swissQA},
in which the longer line at each sample represents the first PSD,
and the second line is orthogonal to the first PSD.
We also report the ALSTD and ALED, ALCD and GSCD in Table \ref{tab:alResult}
and Table \ref{tab:cdResult}. From the tables we see that MVU performs best
on this swiss-roll dataset, while the proposed FrobMetric method ranks
the second best.
On the teapot dataset,
the proposed method performs slightly better than MVU, while worse than both Isomap and LLE.
Overall, the proposed method is similar to the original MVU in terms
of these embedding quality criteria. However, note that the proposed
method is much faster than MVU in all cases.
\begin{figure}[htb!]
    \centering
    \begin{tabular}{c}
    \includegraphics[width=0.3\textwidth]{./mvu/swiss_data} \\
    original data
    \end{tabular}
    \begin{tabular}{cc}
    \includegraphics[width=0.24\textwidth]{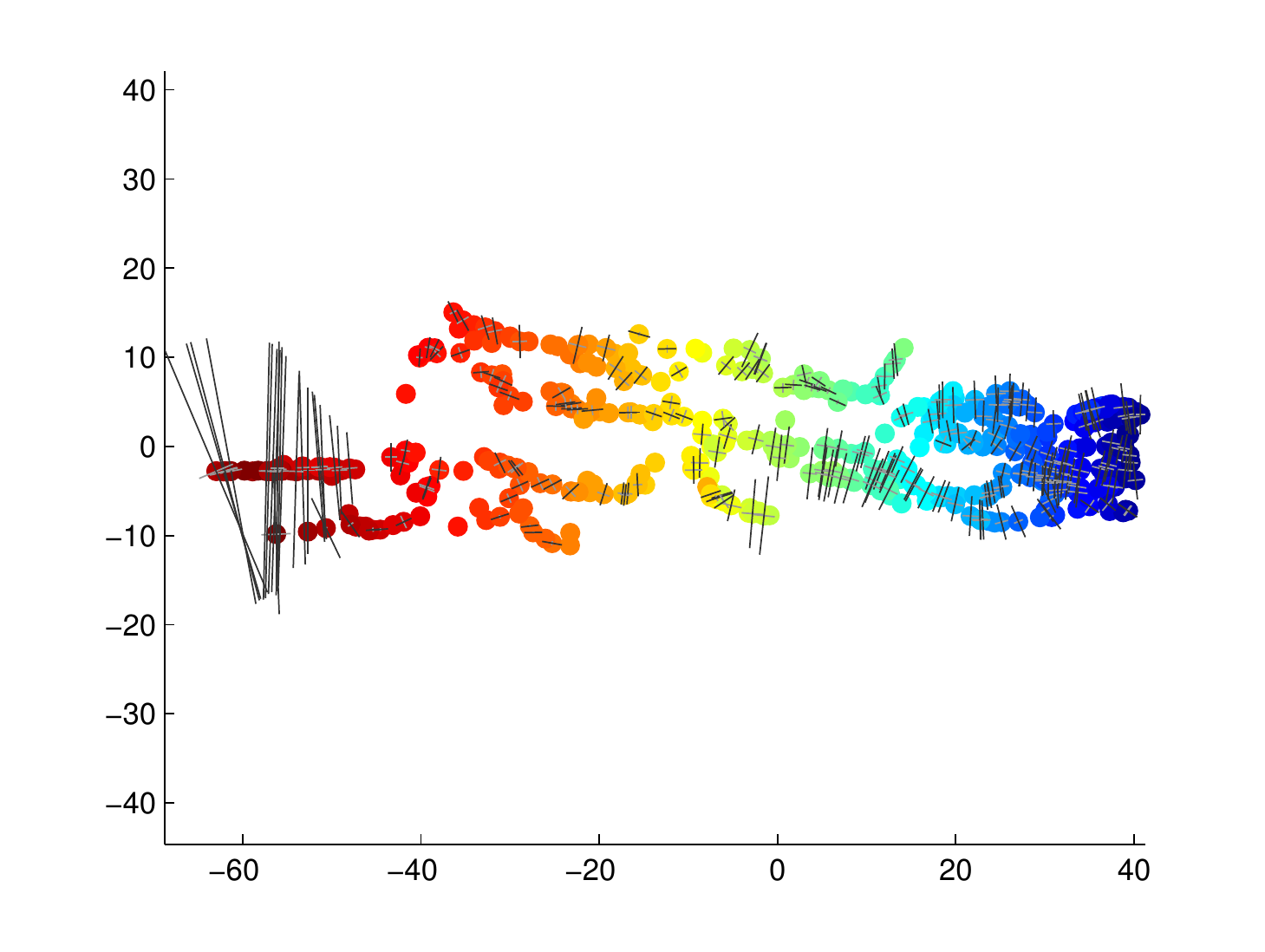} &
    \includegraphics[width=0.24\textwidth]{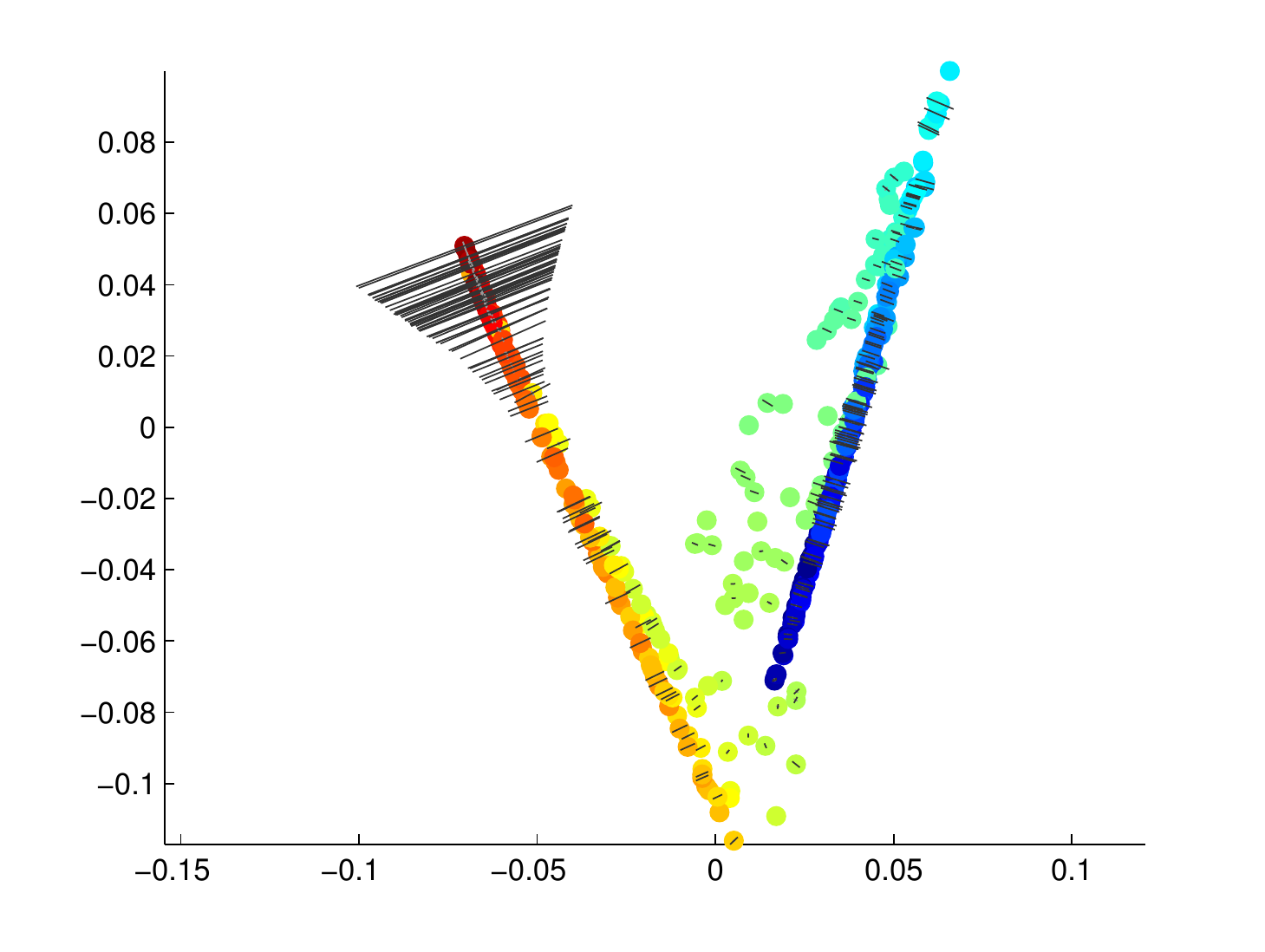}
    \\ (a) & (b)
    \end{tabular}
    \begin{tabular}{cc}
    \includegraphics[width=0.24\textwidth]{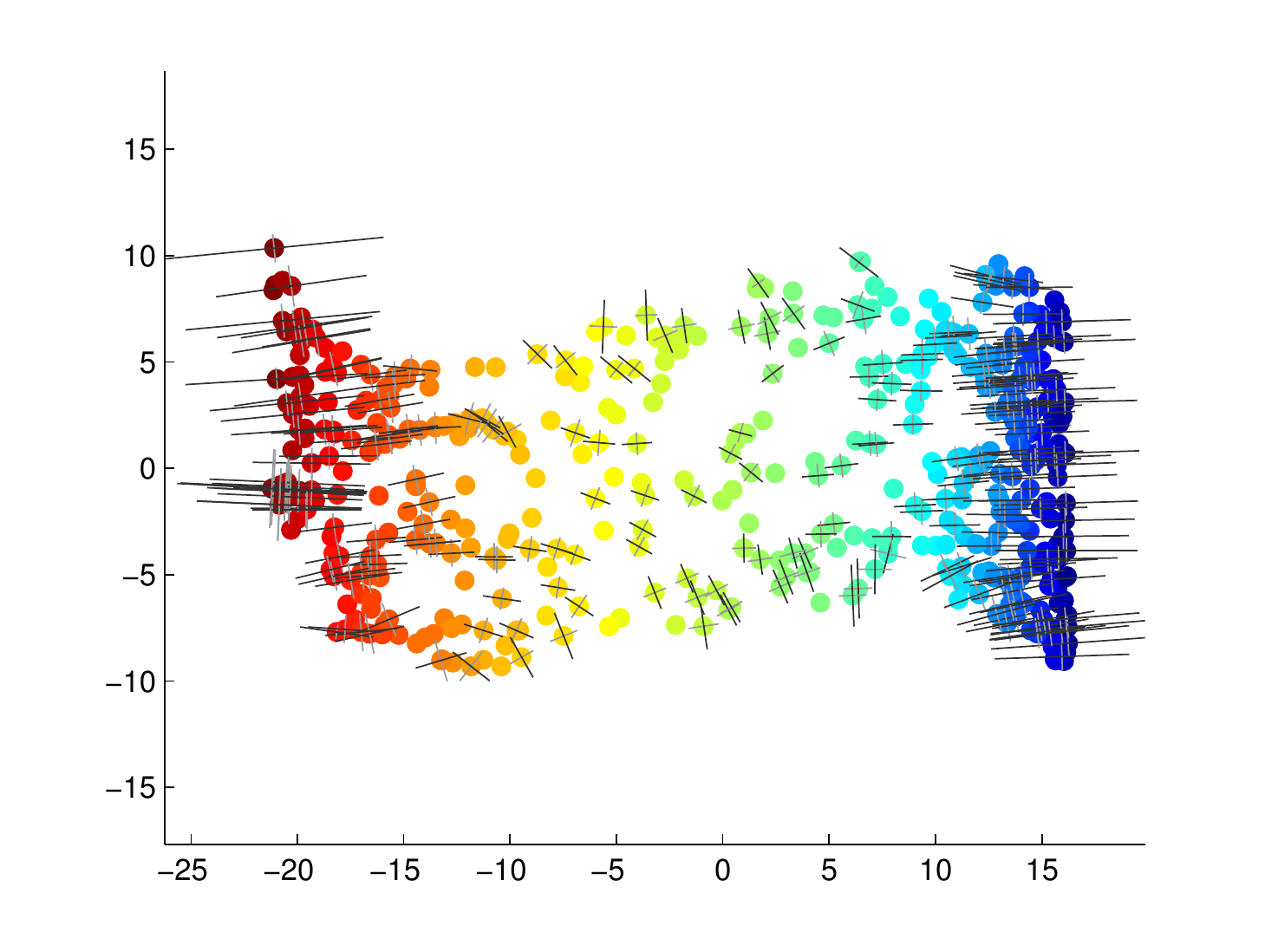} &
    \includegraphics[width=0.24\textwidth]{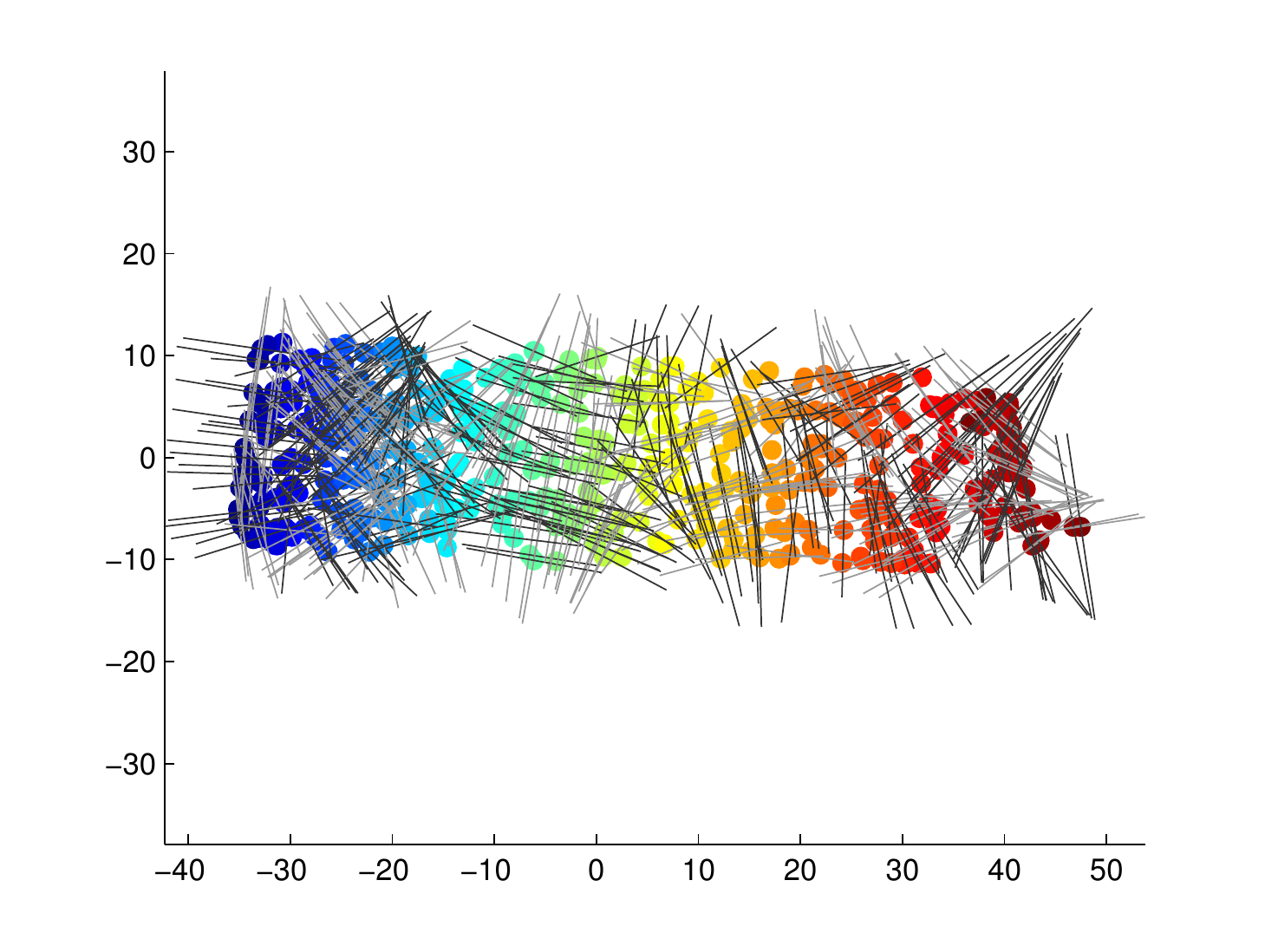}
    \\ (c) & (d)
    \end{tabular}
    \caption{Visualization results based on PSD on 3D swiss roll,
    with the neighborhood size $k=6$ for all,
    and $\sigma = 10^{5}$ for our method; (a)Isomap, (b)LLE,
    (c) our method, (d) MVU.} \label{fig:swissQA}
\end{figure}

\begin{table}[!hbp]
\centering
\caption{ALSTD and ALED results on the 3D swiss roll and teapot datasets} \label{tab:alResult}
\begin{tabular}{|c|c|c|c|c|}
    \hline
    Algorithm &  FrobMetric & MVU & Isomap & LLE \\  \hline
    \multicolumn{5}{|c|}{ALSTD} \\  \hline
    Swiss roll & 0.113 & \textbf{0.038} & 0.245 & 0.328\\  \hline
    Teapot & 0.377 & 0.481 & \textbf{0.0611} & 0.0805\\  \hline
    \multicolumn{5}{|c|}{ALED} \\  \hline
    Swiss roll & 0.311 & \textbf{0.102} & 0.668 & 0.886\\ \hline
    Teapot & 1.0  & 1.28 & \textbf{0.173} & 0.232\\
    \hline
\end{tabular}
\end{table}

\begin{table}
\centering
\caption{ALCD and GSCD results on the 3D swiss roll and teapot datasets} \label{tab:cdResult}
\begin{tabular}{|c|c|c|c|c|}
    \hline
    Algorithm &  FrobMetric & MVU & Isomap & LLE \\ \hline
    \multicolumn{5}{|c|}{ALCD} \\ \hline
    Swiss roll & 0.983 & 0.964 & 0.969 & \textbf{0.994}\\  \hline
    Teapot & 0.995 & 0.942 & \textbf{0.997} & 0.995\\ \hline
    \multicolumn{5}{|c|}{GSCD} \\  \hline
    Swiss roll & 0.1163 & \textbf{0.0404} & 0.2572 & 0.3317\\ \hline
    Teapot & 0.3792 & 0.5105 & \textbf{0.0613} & 0.0808\\
    \hline
\end{tabular}
\end{table}

To show the efficiency of our approach,
in Figure~\ref{fig:mvu_time} we have compared the computational time
between the original MVU implementation and the proposed method,
by varying the number of data samples, which determines the number of variables in MVU.
Note that the original MVU implementation uses CSDP \cite{borchers1999csdp}, which is an interior-point
based Newton algorithm.
We use the 3D ``swiss-roll'' data here.
\begin{figure}[htb!]
\begin{center}
\begin{tabular}{c}
\includegraphics[width=0.45\textwidth]{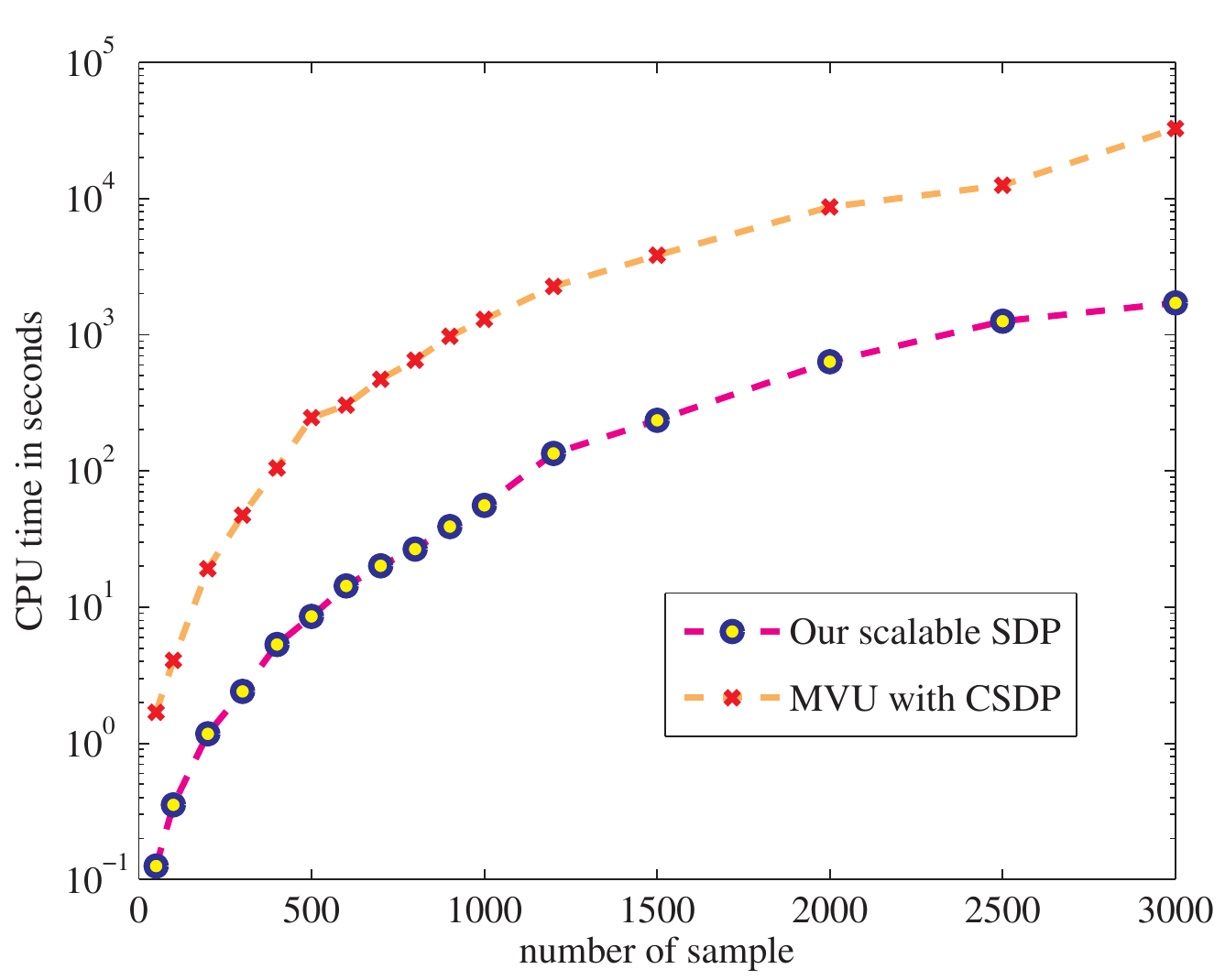}
\end{tabular}
\end{center}
\caption{ {Comparison of computational time on 3D Swiss roll dataset
                  between MVU and our fast approach.
                  Our algorithm uses $\sigma = 10^2$.
                  Our algorithm is about $ 15 $ times faster.
                  Note that the $ y$-axis is in log-scale.}
        } \label{fig:mvu_time}
\end{figure}

}  %

\section{Conclusion}
\label{Conclusion}

    We have presented an efficient and scalable semidefinite metric learning algorithm.
    Our algorithm is simple to implement and much more scalable than most SDP solvers.
    The key observation is that, instead of solving the original primal problem, we solve
    the Lagrange dual problem by exploiting its special structure. Experiments on UCI benchmark
    data sets as well as the unconstrained face recognition task show its efficiency and efficacy.
    We have also extended it to solve more general Frobenius norm regularized SDPs.

\bibliographystyle{ieee}

\begin{thebibliography}{10}

\bibitem{Weinberger2009JMLR}
K.~Q. Weinberger, J.~Blitzer, and L.~K. Saul,
\newblock ``Distance metric learning for large margin nearest neighbor
  classification,''
\newblock {\em J. Mach. Learn. Res.}, vol. 10, pp. 207--244, 2009.

\bibitem{Shen2010TNN}
C.~Shen, J.~Kim, and L.~Wang,
\newblock ``Scalable large-margin {M}ahalanobis distance metric learning,''
\newblock {\em {IEEE} Trans. Neural Networks}, vol. 21, no. 9, pp. 1524--1530,
  2010.

\bibitem{xing2002distance}
E.~Xing, A.~Ng, M.~Jordan, and S.~Russell,
\newblock ``Distance metric learning, with application to clustering with
  side-information,''
\newblock in {\em Proc. Adv. Neural Inf. Process. Syst.} 2002, pp. 505--512,
  MIT Press.

\bibitem{Domeniconi2005}
C.~Domeniconi, D.~Gunopulos, and J.~Peng,
\newblock ``Large margin nearest neighbor classifiers,''
\newblock {\em {IEEE} Trans. Neural Networks}, vol. 16, no. 4, pp. 899--909,
  2005.

\bibitem{goldberger2004neighbourhood}
J.~Goldberger, S.~Roweis, G.~Hinton, and R.~Salakhutdinov,
\newblock ``Neighbourhood component analysis,''
\newblock in {\em Proc. Adv. Neural Inf. Process. Syst.} 2004, pp. 513--520,
  MIT Press.

\bibitem{Shental2002}
N.~Shental, T.~Hertz, D.~Weinshall, and M.~Pavel,
\newblock ``Adjustment learning and relevant component analysis,''
\newblock in {\em Proc. Euro. Conf. Comp. Vis.}, London, UK, 2002, vol.~4, pp.
  776--792, Springer-Verlag.

\bibitem{Davis2007Info}
J.~V. Davis, B.~Kulis, P.~Jain, S.~Sra, and I.~S. Dhillon,
\newblock ``Information-theoretic metric learning,''
\newblock in {\em Proc. Int. Conf. Mach. Learn.}, Corvalis, Oregon, 2007, pp.
  209--216, ACM Press.

\bibitem{BoostMetric}
C.~Shen, J.~Kim, L.~Wang, and A.~van~den Hengel,
\newblock ``Positive semidefinite metric learning with boosting,''
\newblock in {\em Proc. Adv. Neural Inf. Process. Syst.}, 2009, pp. 1651--1659.

\bibitem{ShenJMLR}
C.~Shen, J.~Kim, L.~Wang, and A.~{van den Hengel},
\newblock ``Positive semidefinite metric learning using boosting-like
  algorithms,''
\newblock {\em J. Mach. Learn. Res.}, vol. 13, pp. 1007−1036, 2012.

\bibitem{LPBoost}
A.~Demiriz, K.~P. Bennett, and J.~Shawe-Taylor,
\newblock ``Linear programming boosting via column generation,''
\newblock {\em Mach. Learn.}, vol. 46, no. 1-3, pp. 225--254, March 2002.

\bibitem{mvu}
K.~Q. Weinberger and L.~K. Saul,
\newblock ``Unsupervised learning of image manifolds by semidefinite
  programming,''
\newblock {\em Int. J. Comp. Vis.}, vol. 70, no. 1, pp. 77--90, 2005.

\bibitem{LS2005}
S.~Boyd and L.~Xiao,
\newblock ``Least-squares covariance matrix adjustment,''
\newblock {\em SIAM J. Matrix Analysis \& Applications}, vol. 27, no. 2, pp.
  532--546, 2005.

\bibitem{Shen2011CVPRb}
C.~Shen, J.~Kim, and L.~Wang,
\newblock ``A scalable dual approach to semidefinite metric learning,''
\newblock in {\em Proc. IEEE Conf. Comp. Vis. Pattern Recogn.}, Colorado
  Springs, USA, 2011, pp. 2601--2608.

\bibitem{boyd2004convex}
S.~Boyd and L.~Vandenberghe,
\newblock {\em Convex Optimization},
\newblock Cambridge University Press, 2004.

\bibitem{Borwein2000}
J.~Borwein and A.~Lewis,
\newblock {\em Convex Analysis and Nonlinear Optimization},
\newblock Springer-Verlag, New York, 2000.

\bibitem{Liu1989}
D.~C. Liu and J.~Nocedal,
\newblock ``On the limited memory {BFGS} method for large scale optimization,''
\newblock {\em Math. Program.: Series A and B}, vol. 45, no. 3, pp. 503--528,
  1989.

\bibitem{LibSVM}
C.-C. Chang and C.-J. Lin,
\newblock ``{LIBSVM}: A library for support vector machines,''
\newblock {\em ACM Trans. Intelligent Systems \& Technology}, vol. 2, no. 3,
  pp. 1--27, 2011.

\bibitem{LFWTech}
G.~B. Huang, M.~Ramesh, T.~Berg, and E.~Learned-Miller,
\newblock ``Labeled faces in the wild: A database for studying face recognition
  in unconstrained environments,''
\newblock Technical {R}eport 07-49, University of Massachusetts, Amherst,
  October 2007.

\bibitem{Guillaumin09isthat}
M.~Guillaumin, J.~Verbeek, and C.~Schmid,
\newblock ``Is that you? metric learning approaches for face identification,''
\newblock in {\em Proc. IEEE Int. Conf. Comp. Vis.}, 2009, pp. 498--505.

\bibitem{lowe2004sift}
D.~G. Lowe,
\newblock ``Distinctive image features from scale-invariant keypoints,''
\newblock {\em Int. J. Comp. Vis.}, vol. 60, no. 2, pp. 91--110, 2004.

\bibitem{Matthew91Face}
M.~A. Turk and A.~P. Pentland,
\newblock ``Face recognition using {E}igenfaces,''
\newblock in {\em Proc. IEEE Conf. Comp. Vis. Pattern Recogn.}, 1991, pp.
  586--591.

\bibitem{Nowak07Learning}
E.~Nowak and F.~Jurie,
\newblock ``Learning visual similarity measures for comparing never seen
  objects,''
\newblock in {\em Proc. IEEE Conf. Comp. Vis. Pattern Recogn.}, 2007, pp. 1--8.

\bibitem{Huang08LFW}
G.~B. Huang, M.~J. Jones, and E.~Learned-Miller,
\newblock ``{LFW} results using a combined {N}owak plus {MERL} recognizer,''
\newblock in {\em Faces in Real-Life Images Workshop, in Euro. Conf. Comp.
  Vision}, 2008.

\bibitem{Wolf08Descriptor}
L.~Wolf, T.~Hassner, and Y.~Taigman,
\newblock ``Descriptor based methods in the wild,''
\newblock in {\em Faces in Real-Life Images Workshop, in Euro. Conf. Comp.
  Vision}, 2008.

\bibitem{Wolf09Similarity}
L.~Wolf, T.~Hassner, and Y.~Taigman,
\newblock ``Similarity scores based on background samples,''
\newblock in {\em Proc. Asian Conf. Comp. Vis.}, 2009, vol.~2, pp. 88--97.

\bibitem{Pinto09How}
N.~Pinto, J.~DiCarlo, and D.~Cox,
\newblock ``How far can you get with a modern face recognition test set using
  only simple features?,''
\newblock in {\em Proc. IEEE Conf. Comp. Vis. Pattern Recogn.}, 2009, pp.
  2591--2598.

\bibitem{Taigman09Multiple}
Y.~Taigman, L.~Wolf, and T.~Hassner,
\newblock ``Multiple one-shots for utilizing class label information,''
\newblock in {\em Proc. British Mach. Vis. Conf.}, 2009.

\bibitem{Kumar09Attribute}
N.~Kumar, A.~C. Berg, P.~N. Belhumeur, and S.~K. Nayar,
\newblock ``Attribute and simile classifiers for face verification,''
\newblock in {\em Proc. IEEE Int. Conf. Comp. Vis.}, 2009, pp. 365--372.

\bibitem{Cao10Face}
Z.~Cao, Q.~Yin, X.~Tang, and J.~Sun,
\newblock ``Face recognition with learning-based descriptor,''
\newblock in {\em Proc. IEEE Conf. Comp. Vis. Pattern Recogn.}, 2010, pp.
  2707--2714.

\bibitem{TPAMILBP}
T.~Ahonen, A.~Hadid, and M.~Pietikainen,
\newblock ``{Face Description with Local Binary Patterns: Application to Face
  Recognition},''
\newblock {\em {IEEE} Trans. Pattern Anal. Mach. Intell.}, vol. 28, pp.
  2037--2041, 2006.

\bibitem{localsvm}
C.~Sch{\"u}ldt, I.~Laptev, and B.~Caputo,
\newblock ``Recognizing human actions: A local {SVM} approach,''
\newblock {\em Proc. Int. Conf. Pattern Recogn.}, vol. 3, pp. 32--36, 2004.

\bibitem{pami07}
L.~Gorelick, M.~Blank, E.~Shechtman, M.~Irani, and R.~Basri,
\newblock ``Actions as space-time shapes,''
\newblock {\em {IEEE} Trans. Pattern Anal. Mach. Intell.}, vol. 29, no. 12, pp.
  2247--2253, 2007.

\bibitem{Laptev08}
I.~Laptev, M.~Marszalek, C.~Schmid, and B.~Rozenfeld,
\newblock ``Learning realistic human actions from movies,''
\newblock in {\em Proc. IEEE Conf. Comp. Vis. Pattern Recogn.}, 2008, pp. 1--8.

\bibitem{FulkersonECCV08}
B.~Fulkerson, A.~Vedaldi, and S.~Soatto,
\newblock ``Localizing objects with smart dictionaries,''
\newblock in {\em Proc. Euro. Conf. Comp. Vis.} 2008, pp. 179--192,
  Springer-Verlag.

\bibitem{Tran08}
D.~Tran and A.~Sorokin,
\newblock ``Human activity recognition with metric learning,''
\newblock in {\em Proc. Euro. Conf. Comp. Vis.} 2008, pp. 548--561,
  Springer-Verlag.

\bibitem{isomap}
J.~B. Tenenbaum, V.~{de Silva}, and J.~C. Langford,
\newblock ``A global geometric framework for nonlinear dimensionality
  reduction,''
\newblock {\em Science}, vol. 290, pp. 2319--2323, 2000.

\bibitem{lle}
S.~Roweis and L.~Saul,
\newblock ``Nonlinear dimensionality reduction by locally linear embedding,''
\newblock {\em Science}, vol. 290, pp. 2323--2326, 2000.

\bibitem{LeeV08jmlr}
J.~Lee and M.~Verleysen,
\newblock ``Quality assessment of nonlinear dimensionality reduction based on
  k-ary neighborhoods,''
\newblock {\em {JMLR} {P}roc. New challenges for feature selection in data
  mining and knowledge discovery}, vol. 4, pp. 21--35, 2008.

\bibitem{ZhangWHZ11TNN}
J.~Zhang, Q.~Wang, L.~He, and Z.-H. Zhou,
\newblock ``Quantitative analysis of nonlinear embedding,''
\newblock {\em {IEEE} Trans. Neural Networks}, vol. 22, no. 12, pp. 1987--1998,
  2011.

\bibitem{borchers1999csdp}
B.~Borchers,
\newblock ``{CSDP}, a {C} library for semidefinite programming,''
\newblock {\em Optim. Methods and Softw.}, vol. 11, no. 1, pp. 613--623, 1999.

\end{thebibliography}

\end{document}